\crefname{section}{\S}{\S\S}
\newacronym{ode}{\textsc{ode}}{Ordinary Differential Equation}
\newacronym{cnf}{\textsc{cnf}}{Continuous Normalizing Flow}
\newacronym{ivp}{\textsc{ivp}}{Initial Value Problem}
\newacronym{gan}{\textsc{gan}}{Generative Adversarial Network}
\newacronym{vae}{\textsc{vae}}{Variational Autoencoder}
\newacronym{fid}{\textsc{fid}}{Frech\'et Inception Distance}
\newacronym{nfe}{\textsc{nfe}}{Number of Function Evaluation}
\newacronym{seqrf}{\textsc{SeqRF}}{\emph{Sequential Reflow}}
\newacronym{gte}{\textsc{gte}}{\emph{global} truncation error}
\newacronym{lte}{\textsc{lte}}{\emph{local} truncation error}
\newacronym{ot}{\textsc{ot}}{Optimal Transpose}
\newcommand{\bZ}{\mathbf{Z}}
\newcommand{\calL}{{\mathcal{L}}}
\newcommand{\calN}{{\mathcal{N}}}
\newcommand{\calO}{{\mathcal{O}}}
\newcommand{\calU}{{\mathcal{U}}}
\newcommand{\calZ}{{\mathcal{Z}}}
\newcommand{\bbE}{\mathbb{E}}
\newcommand{\bbR}{\mathbb{R}}
\theoremstyle{plain}% default
\newtheorem{thm}{Theorem}
\newtheorem{lem}[thm]{Lemma}
\newtheorem{prop}[thm]{Proposition}
\theoremstyle{definition}
\newtheorem{defn}{Definition}
\newcommand{\dee}{\mathrm{d}}
\newcommand{\grad}{\nabla}
\newcommand{\mathwrap}[1]{\texorpdfstring{#1}{TEXT}}
\def\[#1\]{\begin{equation}\begin{aligned}#1\end{aligned}\end{equation}}
\newcommand\norm[1]{\left\lVert#1\right\rVert}
\newcommand{\red}[1]{\textcolor{BrickRed}{#1}}
\newcommand{\green}[1]{\textcolor{OliveGreen}{#1}}
\newcommand{\blue}[1]{\textcolor{NavyBlue}{#1}}
\newcommand{\yellow}[1]{\textcolor{Apricot}{#1}}
\definecolor{codegreen}{rgb}{0,0.6,0}
\definecolor{codegray}{rgb}{0.5,0.5,0.5}
\definecolor{codepurple}{rgb}{0.58,0,0.82}
\definecolor{backcolour}{rgb}{0.95,0.95,0.92}
\lstdefinestyle{mystyle}{
  backgroundcolor=\color{backcolour}, commentstyle=\color{codegreen},
  keywordstyle=\color{magenta},
  numberstyle=\tiny\color{codegray},
  stringstyle=\color{codepurple},
  basicstyle=\ttfamily\footnotesize,
  breakatwhitespace=false,         
  breaklines=true,                 
  captionpos=b,                    
  keepspaces=true,                 
  numbers=left,                    
  numbersep=5pt,                  
  showspaces=false,                
  showstringspaces=false,
  showtabs=false,                  
  tabsize=2
}
\title{Sequential Flow Straightening for Generative Modeling}
\author{
	Jongmin Yoon \\
	Kim Jaechul Graduate School of AI\\
	Daejeon, Korea \\
	\texttt{jm.yoon@kaist.ac.kr}
	\And
	Juho Lee \\
	Kim Jaechul Graduate School of AI\\
	Daejeon, Korea \\
	\texttt{juholee@kaist.ac.kr} \\
}
\begin{document}
\maketitle

% this must go after the closing bracket ] following \twocolumn[ ...

% This command actually creates the footnote in the first column
% listing the affiliations and the copyright notice.
% The command takes one argument, which is text to display at the start of the footnote.
% The \icmlEqualContribution command is standard text for equal contribution.
% Remove it (just {}) if you do not need this facility.

\begin{abstract}
Straightening the probability flow of the continuous-time generative models, such as diffusion models or flow-based models, is the key to fast sampling through the numerical solvers, existing methods learn a linear path by directly generating the probability path the joint distribution between the noise and data distribution.
One key reason for the slow sampling speed of the ODE-based solvers that simulate these generative models is the global truncation error of the ODE solver, caused by the high curvature of the ODE trajectory, which explodes the truncation error of the numerical solvers in the low-NFE regime.
To address this challenge, We propose a novel method called \gls{seqrf}, a learning technique that straightens the probability flow to reduce the global truncation error and hence enable acceleration of sampling and improve the synthesis quality.
In both theoretical and empirical studies, we first observe the straightening property of our \gls{seqrf}.
Through empirical evaluations via \gls{seqrf} over flow-based generative models, We achieve surpassing results on CIFAR-10, CelebA-$64\times 64$, and LSUN-Church datasets.

% TL;DR: 
% Improve ODE-based generative model by mitigating the factors of high truncation error.
\end{abstract}

\section{Introduction}
\label{main:sec:introduction}

In recent times, continuous-time generative models, exemplified by diffusion models ~\citep{song2019ncsn, song2021scorebased, ho2020ddpm} and flow-based models~\citep{lipman2023flow,liu2023rf}, have demonstrated significant improvement across diverse generative tasks, encompassing domains such as image generation~\citep{dhariwal2021diffusion}, videos~\citep{ho2022video}, and 3D scene representation~\citep{luo2021point}, and molecular synthesis~\citep{xu2022geodiff}. 
Notably, these models have outperformed established counterparts like \glspl{gan}~\citep{goodfellow2014gan} and \glspl{vae}~\citep{kingma2013vae}.
Operating within a continuous-time framework, these models acquire proficiency in discerning the time-reversal characteristics of stochastic processes extending from the data distribution to the Gaussian noise distribution in the context of diffusion models.
Alternatively, in the case of flow-based models, they directly learn the probability flow, effectively simulating the vector field.

\begin{figure}[!t]
\centering
\includegraphics[width=0.6\linewidth]{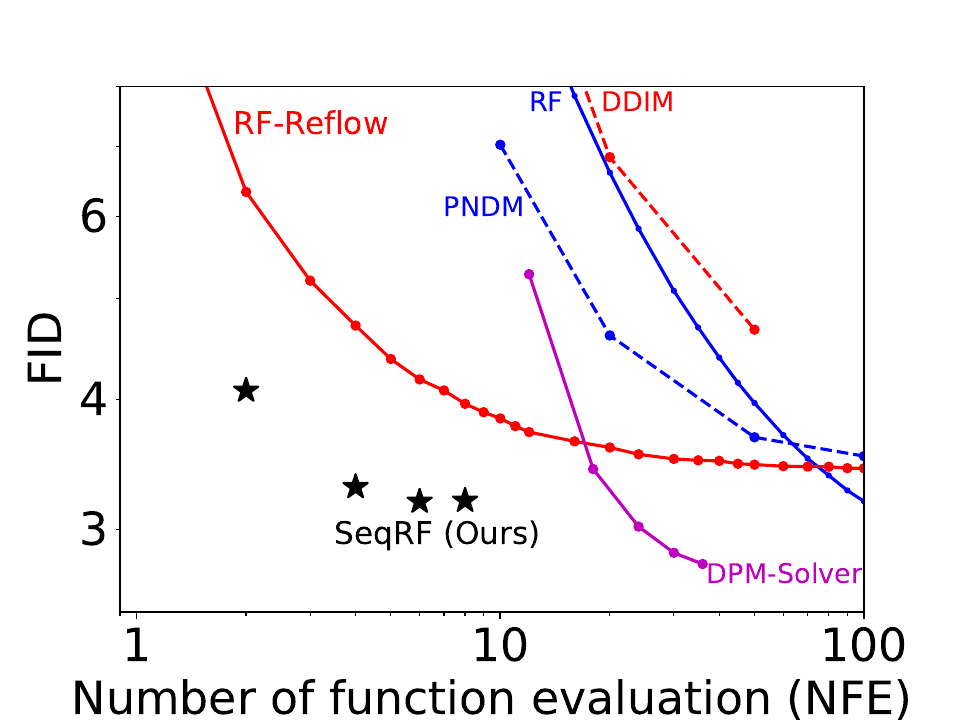} 
\caption{The overall generation result in CIFAR-10 dataset, comparing our method to existing diffusion and flow-based model solvers.
The \textbf{black starred} points stand for our proposed SeqRF method.}
\label{fig:overall_intro}
\end{figure}

% Recently, ~\citet{lipman2023flow} proposed a scheme in continuous-time generative model called~\emph{flow matching} that learns the vector field between the Gaussian noise and the data distribution.
% They first showed that the marginal vector field between these two distributions is obtained through learning from the gradients of the conditional vector fields.
% Furthermore, they found that the conditional vector field will result in the optimal transport between the data and noise distributions when the noise distribution is given as the standard Gaussian distribution.
% Learning the (marginal) vector field is achieved by first independently sampling from the noise and data distributions and then marginalizing out the conditional vector field over the data distributions.
% However, in spite of the nice property of finding the optimal transport that finds the path with the optimal coupling, they suffer from high learning variance and slow training speed by independently drawing training data from data and noise distributions.
% Moreover, learning from flow matching causes the~\emph{curved trajectory} problem, which threatens the tolerance of the ODE solver by accumulating the truncation error of the ODE solver.
% \citet{liu2023rf} proposed a method to straighten the trajectory by using the joint distribution by running the numerical ODE solver from the noise to approximate the data point, but the truncation error remains by trasversing a long path from noise to image space.

% %%%

% ChatGPT version

\begin{figure*}[!ht]
\centering
\begin{center}
\includegraphics[width=0.8\linewidth]{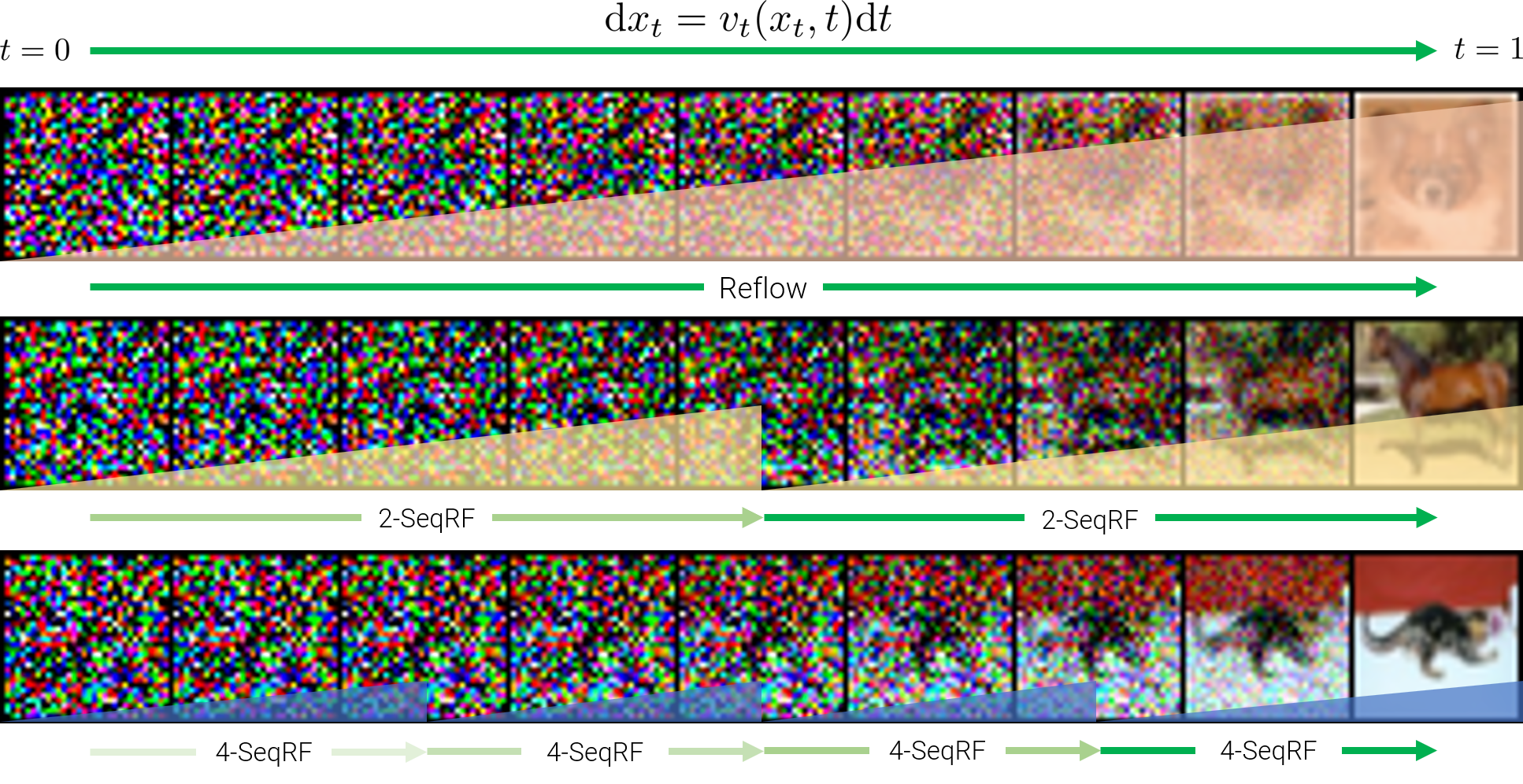}
\end{center}
\caption{The concept figure of our method.
The \red{red}, \yellow{yellow} and \blue{blue} triangles represent the truncation error being accumulated in the corresponding time.
Compared to the \red{red} reflow method, sequential reflow (SeqRF) mitigates marginal truncation error by running time-segmented ODE.
}\label{fig:concept_figure}
\end{figure*}

Recently, ~\citet{lipman2023flow} introduced a novel concept termed~\emph{flow matching} within continuous-time generative models.
This approach focuses on learning the vector field connecting the Gaussian noise and the data distribution.
The authors initially demonstrated that the marginal vector field, representing the relationship between these two distributions, is derived by marginalizing over the gradients of the conditional vector fields.
Moreover, they found that the conditional vector field yields optimal transport between the data and noise distributions, particularly when the noise distribution adheres to the standard Gaussian distribution.
Learning the (marginal) vector field involves independent sampling from both the noise and data distributions, followed by the marginalization of the conditional vector field over the data distributions.
Despite the advantageous property of identifying optimal transport paths with optimal coupling, this method faces challenges such as high learning variance and slow training speed attributable to the independent drawing of training data from data and noise distributions, which results in high gradient variance even at its convergence.
This threatens the stability of the~\gls{ode} solver due to the accumulation of truncation errors.
In response to this challenge,~\citet{liu2023rf} proposed a method to straighten the trajectory by leveraging the joint distribution.
This involves running the numerical ODE solver from the noise to approximate the data point.
However, traversing a long path from noise to image space still leaves exploding global truncation errors unaddressed.

%%%
% By mitigating this truncation error problem, we propose a novel framework called~\emph{sequential reflow}, a simple and efficient training technique of flow-based generative model that resolves this issue by first fractioning the~\gls{ode} trajectory with respect to the time domain, and leverage the joint distribution via partially traversing the solver, instead of drawing the complete data with the full trajectory.

To address the challenge posed by truncation errors, we introduce a novel framework termed~\acrfull{seqrf}.
This novel approach represents a straightforward and effective training technique for flow-based generative models, specifically designed to alleviate the truncation error issue.
The key innovation lies in the segmentation of the~\gls{ode} trajectory with respect to the time domain.
In this strategy, we harness the joint distribution by partially traversing the solver, as opposed to acquiring the complete data with the entire trajectory.
The overall concept of our method is briefly introduced in~\cref{fig:concept_figure}.

% chatgpt modified
Our contribution is summarized as follows.
\paragraph{Controlling the global truncation error}
We begin by highlighting the observation that the global truncation error of numerical \gls{ode} solvers experiences explosive growth, exhibiting superlinear escalation.
This underscores the significance of generating the joint distribution from segmented time domains, thereby ensuring diminished global truncation errors through the strategic halting of the solver before the error reaches critical levels.

\paragraph{Main Proposal: Sequential Reflow for Flow Matching}
Our primary contribution is the introduction of~\emph{sequential reflow} as a method for flow matching in generative modeling.
This flow straightening technique aims to diminish the curvature of segmented time schedules.
It achieves this by generating the joint distribution between data points at different time steps.
Specifically, the source data point, originating from the training set enveloped in noise, undergoes the ODE solver constructed by pre-trained continuous-time generative models such as flow-based or diffusion models.
This marks a departure from conventional reflow methods, which straighten the entire ODE trajectory.
% Then, we propose our main method, \emph{sequential reflow} for flow matching in generative modeling, which is a flow straightening method that reduces the curvature of the segmented time schedules by generating the joint distribution between the data points between two different time steps.
% The source data point, generated by the training set covered with noise, passes the ODE solver constructed by the pre-trained continuous-time generative models such as flow-based or diffusion models.
% This replaces the conventional reflow method, which straightens the whole ODE trajectory.

\paragraph{Validation of Sequential Reflow}
We empirically validate that our sequential reflow method accelerates the sampling process, thereby enhancing the image synthesis quality; the sampling quality improves with the rectified flow, achieved by employing the sequential reflow method subsequent to the initial flow matching procedure.
Furthermore, We implement distillation on each time fragment, enabling the traversal of a single time segment at a single function evaluation. This strategic approach results in superior performance, achieving a remarkable 3.19~\gls{fid} with only 6 function evaluations on the CIFAR-10 dataset, as sketched in~\cref{fig:overall}.

\section{Background: Continuous Normalizing Flow and Flow Matching}\label{sec:background}
Consider the problem of constructing the probability path between the two distributions $\pi_0$ and $\pi_1$ in the data space $\bbR^D$.
Let the~\emph{time-dependent} vector field be $u:\bbR^D\times[0,1]\to\bbR^D$ with $t\in[0,1]$.
Then the~\gls{ivp} on an~\gls{ode} generated by this vector field $u$ is given as
\[
\dee x_t &= u(x_t, t) \dee t.
\label{eq:ode}
\]
\citet{chen2018node,grathwohl2019ffjord} suggested the generative model called~\emph{\gls{cnf}} that reshapes the simple, easy-to-sample density (i.e., Gaussian noise $p_0$) into the data distribution $p_1$, by constructing the vector field $u$ with a neural network $v_\theta(x, t)$, parameterized by $v_\theta:(\bbR^D,\bbR)\to\bbR^D$.
This vector field $v_\theta$ is used to generate a time-dependent continuously differentiable map called~\emph{flow} $\phi_t:\bbR^D\to\bbR^D$ if the random variable generated with $X_t\sim\phi_t$ satisfy~\cref{eq:ode}.
A vector field $v_\theta$ is called to generate the flow $\phi_t$ if it satisfies the push-forward (i.e. (generalized) change of variables) equation
\[
p_t
&=
\left[ \phi_t \right]_* p_0,\\
\text{where }\left[ \phi_t \right]_* p_0 (x)
&=
p_0 (\phi_t^{-1} (x)) \left| \det \left[ \frac{\partial \phi_t^{-1}}{\partial x}(x) \right] \right|.
\]
However, constructing the~\gls{cnf} requires backpropagation of the entire adjoint equation with the NLL objective, which requires full forward simulation and gradient estimation of the vector field over the time domain $[0,1]$.
The~\emph{flow matching}~\citep{lipman2023flow,liu2023rf} algorithm overcomes this limit with a simulation-free (e.g., does not require a complete forward pass for training) algorithm by replacing this with the $\ell_2$-square objective
\[
\calL_\text{FM}(\theta)
&=
\bbE_{t,x_1}
\left[
\norm{v_\theta(x,t) - u(x_t, t)}_2^2
\right],\label{eq:fm_objective}
\]
where $u$ is true vector field defined in~\eqref{eq:ode}.
However, the computational intractability of $v_\theta$ makes it difficult to directly learn from the true objective~\cref{eq:fm_objective}.
\citet{lipman2023flow} found that by using the conditional flow matching objective instead,
\[
\calL_\text{CFM}(\theta)
&=
\bbE_{t,x_1,x_t|x_1}
\left[
\norm{v_\theta(x_t,t) - u(x_t, t|x_1)}_2^2
\right],\label{eq:cfm_objective}
\]
where $p_t(x_t|x_1)$ is the conditional probability density of noisy data $x_t$ given the data $x_1$, we can learn the flow matching model with conditional flow matching objective based on the proposition below.

\begin{prop}[Equivalence of the FM and CFM objective]
The gradient of~\cref{eq:fm_objective} and~\cref{eq:cfm_objective} is equal.
That is, $\calL_\mathrm{FM}(\theta)=\calL_\mathrm{CFM}(\theta) + C$ for a constant $C$. The detailed proof is described in~\cref{app:fm_cfm}.
\end{prop}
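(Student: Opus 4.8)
The plan is to expand both objectives using the bilinearity of the squared $\ell_2$ norm and show they differ only by a $\theta$-independent constant. Writing $\norm{v_\theta - u}_2^2 = \norm{v_\theta}_2^2 - 2\langle v_\theta, u\rangle + \norm{u}_2^2$, I would split each of $\calL_\text{FM}$ and $\calL_\text{CFM}$ into three pieces and compare term by term, the goal being that the $\theta$-dependent pieces (the quadratic and cross terms) coincide while the leftover $\norm{u}_2^2$ terms are absorbed into $C$.

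First I would treat the quadratic term $\norm{v_\theta}_2^2$. In $\calL_\text{FM}$ the relevant expectation is over the marginal $x\sim p_t$, whereas in $\calL_\text{CFM}$ it is over the pair $(x_1, x_t)$ with $x_t\sim p_t(\cdot\given x_1)$ and $x_1\sim p_1$. Since the marginal law obtained by integrating the conditional against the data distribution is exactly $p_t(x)=\int p_t(x\given x_1)\,p_1(x_1)\,\dee x_1$, the two expectations of $\norm{v_\theta(x,t)}_2^2$ agree; this is the easy observation. The crux is the cross term $\langle v_\theta, u\rangle$. Here I would invoke the marginalization identity stated earlier in the paper — the marginal vector field is the $p_t$-weighted average of the conditional vector fields — which in density form reads $p_t(x)\,u(x,t)=\int u(x,t\given x_1)\,p_t(x\given x_1)\,p_1(x_1)\,\dee x_1$. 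Substituting this into the marginal cross term and applying Fubini to exchange the order of integration converts $\bbE_{t,\,x\sim p_t}[\langle v_\theta(x,t),u(x,t)\rangle]$ into $\bbE_{t,x_1,x_t}[\langle v_\theta(x_t,t),u(x_t,t\given x_1)\rangle]$, matching the CFM cross term exactly, because the $p_t(x)$ factors cancel against the weight defining the marginal expectation.

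The main obstacle is precisely this cross-term manipulation: it requires the marginalization identity to hold pointwise so the $p_t(x)$ factors cancel cleanly, and it requires integrability of the integrand so that Fubini applies; I would state these as mild regularity assumptions (finiteness of the second moments of $v_\theta$ and $u$, and $p_t(x)>0$ on the support) in line with the flow matching literature. Finally, the remaining term $\norm{u}_2^2$ genuinely differs between the two objectives — $\bbE[\norm{u(x,t)}_2^2]$ versus $\bbE[\norm{u(x_t,t\given x_1)}_2^2]$ — but neither depends on $\theta$, so their difference is the constant $C$. Collecting the three comparisons yields $\calL_\text{FM}(\theta)=\calL_\text{CFM}(\theta)+C$, and since the only $\theta$-dependent pieces coincide, the gradients $\grad_\theta\calL_\text{FM}$ and $\grad_\theta\calL_\text{CFM}$ are identical, which is the assertion.
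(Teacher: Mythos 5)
Your proposal is correct and follows essentially the same route as the paper's own proof: expand the squared norm into quadratic, cross, and $\theta$-independent terms, equate the quadratic terms via $p_t(x)=\int p_t(x\given x_1)p_1(x_1)\,\dee x_1$, and handle the cross term by substituting the marginalization identity for the vector field and exchanging the order of integration. If anything, your write-up is slightly more careful than the paper's, since you make the Fubini/integrability hypotheses and the constant $C$ explicit, whereas the paper's appendix states the same steps with some notational slips.
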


\subsection{Constructing Straight Vector Field with Flow matching}\label{sec:fm_ot}

\citet{lipman2023flow} proposed a natural choice of constructing flow between two distributions by taking
\[
p_t\sim\calN(tx_1, (1-(1-\sigma_\mathrm{min})t)^2 I)
\label{eq:ot_interpolant}
\]
at time $t\in[0,1]$.
This setting is directly related to constructing the straightened flow between two distributions.
\citet{mccan1997ot} showed that with two Gaussian distributions at the endpoints $p_0\sim\calN(0,I)$ and $p_1\sim\calN(x_1,\sigma_\mathrm{min}^2 I)$, the vector field 
\[
u(x_t,t|x_1,\sigma_\mathrm{min})
&=
\frac{x_t - tx_1}{(1-\sigma_\mathrm{min})(1-t)+\sigma_\mathrm{min}}
\label{eq:conditional_ot_interpolant}
\]
achieves the conditionally straight path between $p_0$ and $p_1$.

% A natural choice to construct flow between two distributions is to independently draw samples from two distributions and let the vector field at $t$ be the conditional Gaussian, centered at the linear interpolants of these samples.
\citet{liu2023rf, albergo2023stochastic} designed the source distribution $p_1$ and the conditional distribution by approaching $\sigma_\text{min}\to 0$, as
\[
p_1\sim \pi_0 \times \pi_1, \quad u(x_t,t|x_1)=\lim_{\sigma_\mathrm{min}\to 0 } u(x_t,t|x_1,\sigma_\mathrm{min}).
\]
Then, the CFM objective becomes
\[
\bbE_{t,x_0,x_1,n}
\left[
\norm{v_\theta(x_t,t ) - (tx_1 + (1-t)x_0 + n)}\label{eq:interpolant}
\right],
\]
where $(x_0,x_1)\sim\pi_0 \times\pi_1$ and $n\sim\calN(0,I)$,
which is reduced to
\[
\bbE_{t,x_0,x_1}
\left[
\norm{v_\theta(x_t,t) - (tx_1 + (1-t)x_0)}\label{eq:rf}
\right]
\]
which corresponds to the rectified flow objective~\citep{liu2023rf,liu2023instaflow}.

%%%%%%%%%%%%%%%%%%%%%%%%%%%

\subsection{The Truncation Errors of the Numerical Solvers and Rectified Flow}
Let the~\gls{ivp} of an~\gls{ode} be given as follows:
\[
\begin{cases}
\dee x_t = v_\theta(x_t, t)\dee t,\\
x_a=x(a)
\end{cases}\quad t\in[a,b],
\label{eq:ode_ivp}
\]
where $v_\theta$ is the (learned) neural network, $x(a)$ is the initial value, and $(a,b)$ are the initial and terminal points of the~\gls{ode}, respectively.
In general, a solver with the~\gls{ode}~\cref{eq:ode_ivp} is defined by generating the sequence $\{x_a=x_{t_0},x_{t_i},\cdots,x_{t_N}=x_b\}$ with the following equation
\[
x_{t_{i+1}} = x_{t_i} +(t_{i+1} - t_i) A(x_{t_i}, t_i, h_i; v_\theta)
\label{eq:general_solver}
\]
where $A(x_{t_i},t_i;v_\theta)$ is the increment function and $h_i=t_{i+1}-t_i$ is the interval between two consecutive steps.
Then the \emph{truncation error} of a solver is defined by the difference between the true solution of~\eqref{eq:ode_ivp} and the approximation from~\eqref{eq:general_solver}.
There are two kinds of truncation errors: the \emph{local} and \emph{global} truncation errors.
\begin{defn}[truncation errors]
Let $x_{t_i}$ be an estimate with~\eqref{eq:general_solver} of \gls{ode}~\eqref{eq:ode_ivp} at time $t_i$.
Then the~\gls{lte} $\tau(t_i, h_i)$ is
\[
\tau(t_i,h_i) 
&=
(\hat{x}_{t_i+h_i}(x_{t_i}) - x_{t_i}) - A(x_{t_i}, t_i, h_i; v_\theta).
\]
where $\hat{x}_{t_i+h_i}(x_{t_i})$ is the true solution of~\eqref{eq:ode_ivp} which starts at point $x_{t_i}$ from time $t_i$ to $t_i+h_i$.
The~\gls{gte} $E(t_i)$ at time $t$ is the accumulation of~\gls{lte} over time $\{t_j\}_{j=1}^i$,
\[
E(t_i)
&=
(x(t_i)-x(a))-\sum_{j=1}^i A(x_{t_j}, t_j, h_j; v_\theta).
\]
where $x(t)$ is the true solution of~\eqref{eq:ode_ivp} at time $t$.
\end{defn}
As the CFM objective~\eqref{eq:cfm_objective} with flow matching or rectified flow achieves conditional OT mapping between two conditional distributions, it is obvious that with the same increment function $A(x_{t_i}, t_i; v_\theta)$ of the numerical solver~\eqref{eq:general_solver} from a point $x_a$ in the source distribution $a$ to the conditional target distribution $p_b(\cdot|x_a)$ with one-step linear solver with timestep interval $h=b-a$, eliciting zero global truncation error.

Despite that this provably holds for optimal transport mapping between conditional distributions, this generally does not hold for OT mapping between marginal distributions.
\cref{fig:gte} depicts the global truncation error between the approximately true~\gls{ode} solver and the small-step solver learned by the flow matching algorithm, comparing the~\gls{gte} between the few-step Euler sampler to the 480-step Euler sampler as baseline.
The \textbf{bold black} line, which directly trains the flow matching model via the CFM objective~\eqref{eq:cfm_objective}, has high~\gls{gte}.
The reflow method~\citep{liu2023rf}, marked in a \textbf{dashed black} line, generates pairs of noise and data points from the pre-trained flow matching models and takes each pair as the new joint distribution.

In the next section, We introduce~\acrfull{seqrf} method, displayed by \textbf{\blue{blue}} and \textbf{\red{red}} lines in \cref{fig:gte}, which successfully suppress the~\gls{gte} of the~\gls{ode} solver and hence achieve high image generation quality with a small~\gls{nfe}.
\section{Main method}
Several factors cause the numerical solver to have a high~\gls{gte}, implying that this solver is ill-posed.
In~\cref{sec:gte}, we investigate the aspects that affect to~\gls{gte} by deriving the upper bounds of~\gls{gte}.
To summarize, the (1) \gls{lte} of the solver in a single step, (2) the variance of the drift $v_\theta$, and (3) the total interval of the numerical solver.
In~\cref{sec:seqrf}, we propose our new method, \acrfull{seqrf}, that mitigates these errors to construct a fast and efficient numerical solver that successfully simulates an~\gls{ode}.

\subsection{On the Global Truncation Error of ODE Solver}\label{sec:gte}
In this section, we delve into the factors that cause high global truncation error of the ODE solver to explode.
The following~\cref{thm:gte_bound} implies that the truncation error of an ODE solver depends on both the \emph{local truncation error} of the solver and the \emph{Lipschitz constant} of $v_\theta$.
Following this, \cref{thm:truncation} shows that this is also affected by the \emph{total interval length} of the~\gls{ode}.

% ICML version
% \begin{figure}[!t]
% \centering
% \begin{center}
% \includegraphics[width=\linewidth]{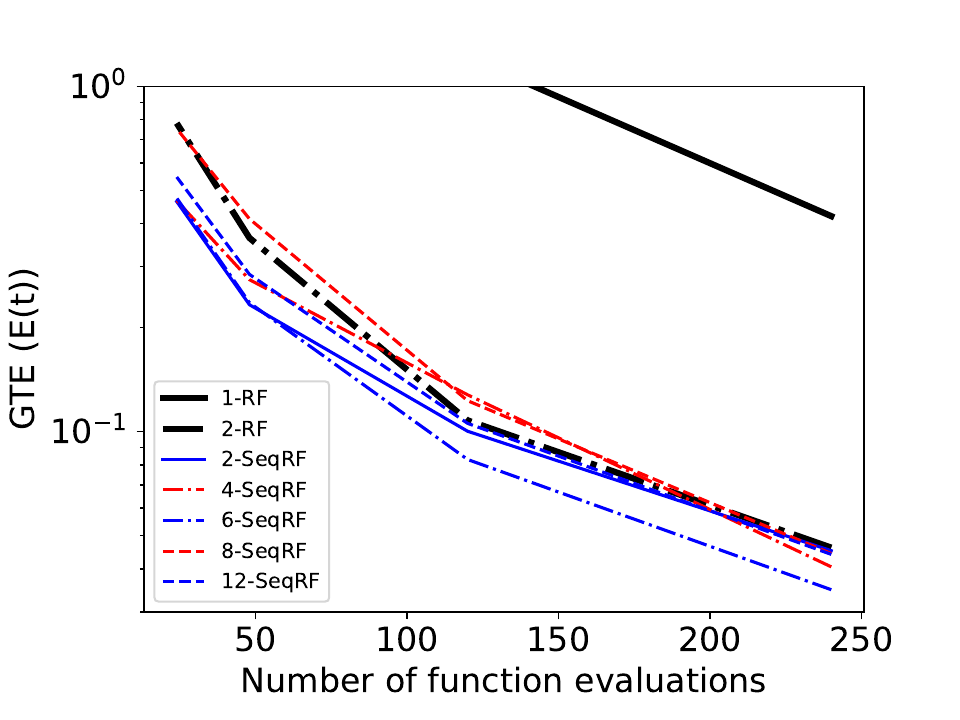}
% \end{center}
% \caption{The global truncation error over~\gls{nfe} in CIFAR-10 dataset, compared to the oracle Euler-480 step solver.
% Our \textbf{SeqRF} methods, shown in \textbf{\blue{blue}} and \textbf{\red{red}} lines, deploy lower global truncation errors.}\label{fig:gte}
% \end{figure}

% Preprint version
\begin{figure}[!t]
\centering
\begin{center}
\includegraphics[width=0.6\linewidth]{}
\end{center}
\caption{The global truncation error over~\gls{nfe} in CIFAR-10 dataset, compared to the oracle Euler-480 step solver.
Our \textbf{SeqRF} methods, shown in \textbf{\blue{blue}} and \textbf{\red{red}} lines, deploy lower global truncation errors.}\label{fig:gte}
\end{figure}

\begin{thm}[Upper Bound of GTE w.r.t. LTE]\label{thm:gte_bound}
Let the local truncation error of the ODE solver be $\tau(t)$, and $M_\textrm{sup}(t)$ be the maximum Lipschitz constant of $f$ at time $t$.
Then The global truncation error $E(c)$ at time $c$ of the one-step ODE solver is bounded by
\[
E(c)
&\leq
\int_{t=a}^c \tau(t)\exp\left(\int_{t'=t}^c M_\mathrm{sup}(t')\dee t'\right) \dee t
\label{eq:gte_wrt_lte}
\]
for all $c\in[a,b]$.
\end{thm}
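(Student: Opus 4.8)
The plan is to treat this as the classical global-versus-local error estimate for one-step integrators, whose engine is Grönwall's inequality. Write $x(t)$ for the exact solution of the IVP~\eqref{eq:ode_ivp} and $\tilde x(t)$ for the (continuously indexed) numerical trajectory generated by the increment function $A(\cdot,\cdot\,;v_\theta)$, and let $E(t)=\norm{x(t)-\tilde x(t)}$ be the magnitude of the global error. Since the solver is seeded at the exact initial value, $x(a)=\tilde x(a)$ and hence $E(a)=0$, which supplies the initial condition for the inequality to follow. The target~\eqref{eq:gte_wrt_lte} is then exactly the output of integrating a linear differential inequality for $E(t)$, so the whole argument reduces to (i) deriving that inequality and (ii) applying the integrating-factor form of Grönwall.

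For step (i), I would differentiate the vector error and add and subtract the exact drift evaluated along the numerical trajectory,
\[
\frac{\dee}{\dee t}\big(x(t)-\tilde x(t)\big)
&=
\big(v_\theta(x(t),t)-v_\theta(\tilde x(t),t)\big)\\
&\quad
+
\big(v_\theta(\tilde x(t),t)-A(\tilde x(t),t\,;v_\theta)\big).
\]
The first bracket is controlled by the Lipschitz hypothesis, $\norm{v_\theta(x(t),t)-v_\theta(\tilde x(t),t)}\le M_\mathrm{sup}(t)\,E(t)$, while the second bracket is precisely the instantaneous defect of the scheme, which I identify with the local truncation error $\tau(t)$ from the definition. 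Taking norms, using $\tfrac{\dee}{\dee t}E(t)\le\norm{\tfrac{\dee}{\dee t}(x-\tilde x)}$ together with the triangle inequality, yields the linear differential inequality
\[
\frac{\dee}{\dee t}E(t)\le M_\mathrm{sup}(t)\,E(t)+\tau(t).
\]

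For step (ii), I would multiply through by the integrating factor $\exp\!\big(-\!\int_a^t M_\mathrm{sup}(s)\,\dee s\big)$ so that the left-hand side becomes a total derivative; integrating from $a$ to $c$ and using $E(a)=0$ gives
\[
E(c)\exp\!\Big(-\!\int_a^c M_\mathrm{sup}(s)\,\dee s\Big)
&\le
\int_a^c \tau(t)\,\exp\!\Big(-\!\int_a^t M_\mathrm{sup}(s)\,\dee s\Big)\dee t .
\]
Moving the constant exponential to the right and combining $\int_a^c-\int_a^t=\int_t^c$ inside the exponent collapses this to the claimed bound~\eqref{eq:gte_wrt_lte} for every $c\in[a,b]$.

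The main obstacle is not the Grönwall step but the bookkeeping that legitimizes passing from the discrete local truncation error $\tau(t_i,h_i)$ of the definition to the continuous defect $\tau(t)$ inside the integral: one must make precise the sense in which $\tilde x(t)$ is (piecewise) differentiable and how the per-step error normalizes into an instantaneous rate as the grid is refined. A secondary technical point is justifying $\tfrac{\dee}{\dee t}E(t)\le\norm{\tfrac{\dee}{\dee t}(x-\tilde x)}$ at points where the norm fails to be differentiable, which is handled with the upper Dini derivative; and one should note that $M_\mathrm{sup}(t)$ must be a uniform (supremal) Lipschitz constant over a region containing both trajectories at time $t$ for the first-bracket bound to hold along the entire path.
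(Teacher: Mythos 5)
Your proposal is correct in substance and rests on the same two ingredients as the paper's argument in \cref{app:proof:gte_bound} --- a local consistency defect plus Lipschitz propagation of the accumulated error, closed by a Gr\"onwall-type estimate --- but it mechanizes them genuinely differently. The paper stays discrete: it writes the exact solution as one Euler step plus $h\tau(t)$, subtracts the numerical update to obtain the per-step recursion $E(t+h)\le(1+hM(t))E(t)+h\tau(t)$, unrolls this by induction into a sum of products, bounds each factor with $1+hM\le e^{hM}$, and only at the very end lets $h\to0$ so that the sums become the integrals of \eqref{eq:gte_wrt_lte}. You instead run the continuous Gr\"onwall argument from the start: a linear differential inequality for $E(t)$ dispatched by an integrating factor. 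Your route is shorter, makes the Gr\"onwall structure transparent, and your Dini-derivative remark is a genuine refinement the paper omits; in exchange, you front-load the discrete-to-continuous idealization that the paper defers to its final limit. One display in your sketch does need repair: for the one-step Euler scheme the paper actually analyzes, $A(x,t;v_\theta)=v_\theta(x,t)$ pointwise, so your bracket $v_\theta(\tilde x(t),t)-A(\tilde x(t),t;v_\theta)$ vanishes identically and cannot be the defect. The nonzero defect is $v_\theta(\tilde x(t),t)-A(\tilde x(t_i),t_i,h_i;v_\theta)$, with the increment frozen at the last grid point of the piecewise interpolation (its per-step integral recovers the LTE under the $h\to0$ normalization); alternatively, mirroring the paper, one measures the defect along the exact trajectory via $x'(t)-A(x(t),t;v_\theta)$, at the price of invoking the Lipschitz constant of $A$ rather than of $v_\theta$ for a general increment function (the two coincide for Euler). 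You flag exactly this bookkeeping as the main obstacle yourself, and the paper's own passage to the limit (``take $h\to 0$ with $K\to\frac{c-a}{h}$'') is no more rigorous than what you propose, so with the bracket corrected your proof stands as a legitimate continuous-time counterpart of the paper's discrete induction.
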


\begin{proof}
Please refer to~\cref{app:proof:gte_bound}.
\end{proof}

\begin{figure}[!t]
\centering
\includegraphics[width=\linewidth]{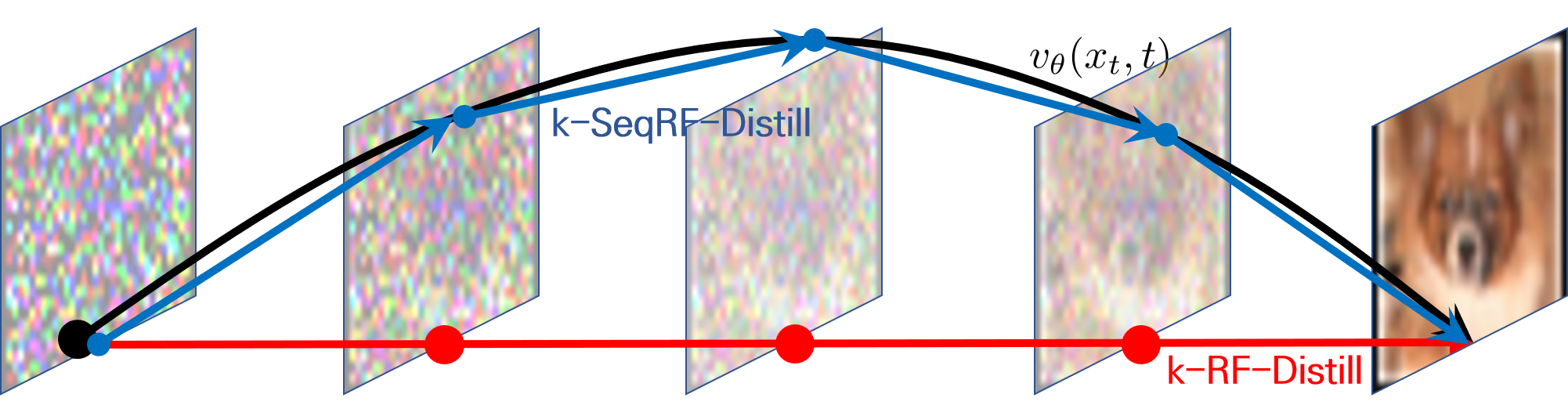} 
\caption{The concept figure of $k$-SeqRF-Distill, contrast to the RF-Distill method~\citep{liu2023rf}.}
\label{fig:concept_seqrf_distill}
\end{figure}

Then, \citet{dahlquist1963} provided the upper bound of the global truncation error of the generalized~\gls{ode} solvers as follows.

\begin{lem}[Dahlquist Equivalence Theorem ~\citep{dahlquist1963}]
Let $[a,b]$ be divided by $h$ equidistributed intervals, i.e., $t_i=a+\frac{i}{h}(b-a)$.
Then the generalized linear multistep method is defined by
\[
&x_{a + (i+1)h}
=
\\&\sum_{j=0}^p \alpha_j x_{a-jh} + h \sum_{j=-1}^p \beta_j v(x_{a-jh},a-jh)),\quad i\geq p.
\label{eq:linear_multistep}
\]
Then for a linear multistep method of order $p$ that is consistent with the~\gls{ode} (\ref{eq:ode_ivp}) where $v_t$ satisfy the Lipschitz condition and with fixed initial value $x_a$ at $t=a$, the global truncation error is $\calO(h^p)$.
\label{lem:dahlquist}
\end{lem}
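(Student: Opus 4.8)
The plan is to bound the \gls{gte} by the accumulated \emph{local} residuals of the multistep scheme and then to control that accumulation through zero-stability together with a discrete Grönwall argument. Write $x(t_n)$ for the exact solution of~\eqref{eq:ode_ivp} sampled at $t_n = a + nh$, write $x_n$ for the numerical iterate from~\eqref{eq:linear_multistep}, and set the error $e_n \defas x(t_n) - x_n$. The goal is to show $\norm{e_n} = \calO(h^p)$ uniformly in $n$ for $t_n \in [a,b]$.

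First I would quantify the single-step residual. Substituting the exact solution into~\eqref{eq:linear_multistep} and Taylor-expanding $x(t)$ about $t_n$, the order-$p$ consistency conditions on the coefficients $\{\alpha_j\}$ and $\{\beta_j\}$ force the leading terms to cancel, so the local residual $\tau_n$ obeys $\norm{\tau_n} = \calO(h^{p+1})$ uniformly on $[a,b]$; here I use that $v_t$ Lipschitz, plus the smoothness this entails, makes $x \in C^{p+1}$ with bounded derivatives. Subtracting the exact-solution recursion from the numerical recursion then yields a linear error recursion in which $\sum_j \alpha_j e_{n-j}$ equals $h \sum_j \beta_j \big( v(x(t_{n-j}), t_{n-j}) - v(x_{n-j}, t_{n-j}) \big)$ plus $\tau_n$. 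The Lipschitz condition $\norm{v(x,t) - v(y,t)} \le L \norm{x-y}$ replaces each velocity difference by $L \norm{e_{n-j}}$, producing a $(p+1)$-term difference inequality driven by the $\calO(h^{p+1})$ residuals.

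The hard part is turning this multi-term recursion into a usable global bound: unlike one-step methods, the homogeneous part of a multistep recursion can amplify perturbations geometrically. I would stack $E_n \defas (e_n, \dots, e_{n-p})^\top$ and rewrite the recursion as a one-step iteration $E_{n+1} = (\bC + h\,\Delta_n) E_n + h\, r_n$, where $\bC$ is the companion matrix of the generating polynomial $\rho(\zeta) = \zeta^{p+1} - \sum_j \alpha_j \zeta^{p-j}$, the matrix $\Delta_n$ collects the Lipschitz-bounded velocity terms so that $\norm{\Delta_n} \le C L$, and $r_n$ carries $\tau_n$. The crux is \emph{zero-stability}: the root condition on $\rho$ (all zeros in the closed unit disk, those on the boundary simple) makes the powers $\bC^m$ uniformly bounded, $\norm{\bC^m} \le K$. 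I expect this to be the main obstacle, and I would flag explicitly that it is a hypothesis the statement leaves implicit — Dahlquist's equivalence is precisely the assertion that consistency together with zero-stability is necessary and sufficient for convergence, so without the root condition the claimed order can fail entirely.

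Finally, with $\norm{\bC^m} \le K$ in hand, I would iterate the one-step form and apply the discrete Grönwall inequality to obtain $\norm{E_n} \le K e^{C' L (b-a)} \big( \norm{E_0} + \sum_{k} \norm{r_k} \big)$. Since there are $N = \calO(h^{-1})$ steps, each residual is $\calO(h^{p+1})$, and the consistent starting values give $\norm{E_0} = \calO(h^p)$, the sum contributes $\calO(h^{-1}) \cdot \calO(h^{p+1}) = \calO(h^p)$. Hence $\norm{e_n} = \calO(h^p)$ uniformly on $[a,b]$, which is the asserted order of the global truncation error.
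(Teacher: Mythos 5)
Your proposal is correct in substance, but it takes a genuinely different route from the paper --- indeed, the paper does not actually prove this lemma. Its appendix argument simply invokes the classical Dahlquist equivalence theorem as a known fact (``according to the Dahlquist equivalence theorem, the global truncation error of the ODE solver has the order of $\calO(h^r)$'') and then devotes its effort to the consequence that segmenting $[a,b]$ into $K$ equal pieces rescales the accumulated error as $K\times\calO((h/K)^r)=K^{1-r}\calO(h^r)$; in other words, what is written under this lemma is really the proof of the subsequent theorem on time segments, with the lemma itself treated as a citation. You, by contrast, give the standard self-contained convergence proof: Taylor expansion plus the order-$p$ consistency conditions to get $\calO(h^{p+1})$ local residuals, a companion-matrix rewriting of the $(p+1)$-term error recursion, uniform boundedness of the powers of $\bC$ via the root condition, and a discrete Gr\"onwall bound summing $N=\calO(h^{-1})$ residuals into a global $\calO(h^p)$ error. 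This buys two things the paper's treatment lacks: an actual proof, and the explicit observation that zero-stability is a hypothesis the statement leaves implicit --- as written (consistency plus Lipschitz $v$ only), the claimed order can fail, and consistency together with zero-stability being equivalent to convergence is precisely Dahlquist's theorem, so flagging this is a genuine correction rather than pedantry. Two small caveats on your write-up: Lipschitz continuity of $v$ alone does not ``entail'' $x\in C^{p+1}$ --- the Taylor step needs $v\in C^{p}$ or comparable smoothness, an assumption both you and the paper leave tacit; and for implicit methods ($\beta_{-1}\neq 0$) the error $e_{n+1}$ appears on both sides of the recursion, so before the companion-matrix step you must solve for it, absorbing a factor of the form $(1-h\beta_{-1}L)^{-1}$, which is harmless for $h$ small. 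Neither affects your conclusion.
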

With~\cref{lem:dahlquist}, the global truncation error accumulated by the ODE solver with the same complexity is reduced when the total length of the time interval becomes shorter.
To be precise, the global truncation error is reduced by the order of $K^{p-1}$, where $K$ is the number of intervals and $r$ is the order of the ODE solver, according to the following~\cref{thm:truncation}.
\begin{thm}[Global Truncation Error with Increasing Time Segments]
Suppose that a linear multistep method (\ref{eq:linear_multistep}) successfully simulates the~\gls{ode} (\ref{eq:ode_ivp}).
And suppose that the~\gls{nfe} of linear multistep method from time $(a, t)$ is given the same as $\frac{b-a}{K}$. 
Let $K$ be the number of equidistributed intervals that segment $[a, b]$.
And let (\ref{eq:linear_multistep}) have of order $p$, then  the local truncation error is $\calO(h^{p+1})$, then the following holds.
\begin{enumerate}
\item 
The order of the local truncation error is $\tau(t, h) = \calO(h^{p+1})$.
\item 
The global truncation error at time $t$ is $E(t)=\calO\left(K^{1-p}\right)E(c)$
\end{enumerate}
\label{thm:truncation}
\end{thm}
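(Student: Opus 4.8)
The plan is to prove the two claims in sequence, establishing the local estimate first and then bootstrapping it to the global rate through the segmentation. I treat the first claim as essentially definitional: since the linear multistep method~(\ref{eq:linear_multistep}) is assumed to be of order $p$ and consistent with the~\gls{ode}~(\ref{eq:ode_ivp}), a Taylor expansion of the exact solution about each grid node $t_i$ and matching against the method's increment shows that all error contributions up to and including order $h^p$ cancel, leaving $\tau(t,h)=\calO(h^{p+1})$. I would not grind through the Taylor bookkeeping; instead I would invoke the standard consistency-order characterization already presumed by~\cref{lem:dahlquist}, which encodes exactly this local estimate.

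For the second claim, the crux is tracking how the step size $h$ depends on $K$ once the per-segment budget is held fixed. Partitioning $[a,b]$ into $K$ equidistributed segments of length $\frac{b-a}{K}$ and assigning each segment the same number of function evaluations forces $h=\Theta(1/K)$; writing $h_0$ for the step size used on the full interval at the same budget, this gives $h=h_0/K$. Applying~\cref{lem:dahlquist} on a single segment then yields a per-segment global error of $\calO(h^p)=K^{-p}\,\calO(h_0^p)=K^{-p}E(c)$, where $E(c)=\calO(h_0^p)$ is the reference error of the unsegmented solver.

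The remaining step is to accumulate over the $K$ segments. Since the solution is advanced segment by segment, the terminal global error is the sum of the contributions propagated from each segment boundary forward to time $t$. I would bound this with~\cref{thm:gte_bound}: on each subinterval the amplification factor $\exp\!\left(\int M_\mathrm{sup}\right)$ is controlled by $\exp\!\left(M_\mathrm{sup}\frac{b-a}{K}\right)$, which tends to $1$ as $K$ grows, so the $K$ per-segment errors combine with only a linear-in-$K$ factor rather than the exponential blow-up the unsegmented solver incurs. Summing the $K$ terms, each of size $K^{-p}E(c)$, produces a total global truncation error of $K\cdot K^{-p}E(c)=\calO(K^{1-p})E(c)$, as claimed.

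The step I expect to be the main obstacle is this accumulation bound, that is, making precise that the $K$ segment-wise errors add only linearly in $K$. This requires showing the per-segment Lipschitz amplification $\exp\!\left(M_\mathrm{sup}\frac{b-a}{K}\right)$ stays uniformly bounded (indeed $\to 1$) as $K\to\infty$, and that the reference $E(c)$ absorbs the interval-length and Lipschitz constants consistently so the ratio is a clean power of $K$. The honest subtlety is that a naive Gronwall estimate across all of $[a,b]$ would reintroduce an $\exp(M_\mathrm{sup}(b-a))$ factor; the segmentation argument must instead apply~\cref{thm:gte_bound} locally on each subinterval and only then sum, which is precisely where the $K^{1-p}$ rate, rather than $K^{-p}$, originates. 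I would also flag explicitly that the clean power of $K$ hinges on fixing the per-segment \gls{nfe}, since matching the total \gls{nfe} instead changes the step-size scaling and the resulting exponent.
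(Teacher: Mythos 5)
Your proposal is correct and follows essentially the same route as the paper's own proof: the appendix argument is precisely your computation --- fixing the per-segment evaluation budget forces step size $h/K$, \cref{lem:dahlquist} gives a per-segment global error of $\calO\left(\left(h/K\right)^p\right)$, and multiplying by the $K$ segments yields $K\cdot K^{-p}\,\calO(h^p)=\calO(K^{1-p})E(c)$ --- with your \cref{thm:gte_bound}-based treatment of the accumulation step merely making explicit what the paper leaves implicit when it writes ``as we have $K$ segments, multiply by $K$.'' One caveat on that added step: an error committed in an early segment and carried through the later ones still accrues the product of per-segment amplification factors, i.e.\ the full $\exp\left(M_\mathrm{sup}(b-a)\right)$ as a $K$-independent constant (unless, as in the SeqRF data-generation procedure, each segment is re-initialized from the exact marginal $p_{t_k}$), so your remark that local application ``avoids'' the exponential should be read as the exponential being a constant absorbed into the $\calO(\cdot)$, which leaves the claimed $\calO(K^{1-p})$ rate intact.
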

\begin{proof}
Please refer to \cref{app:dahlquist}.
\end{proof}

\begin{figure}[!t]
\centering
\includegraphics[width=0.6\linewidth]{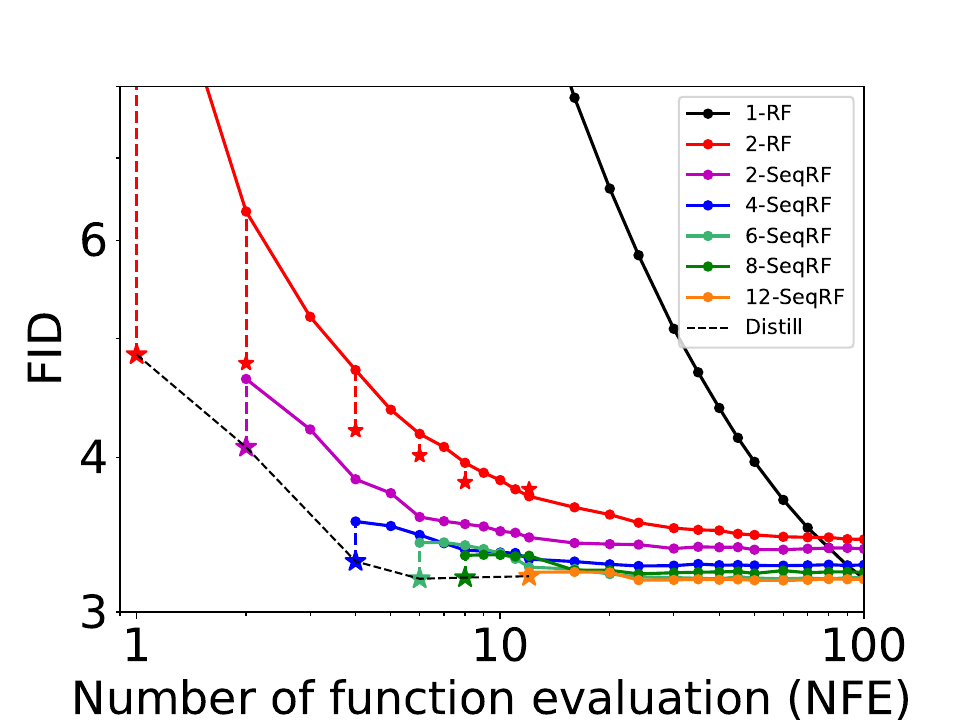} 
\caption{Generation performance of Sequential reflow, compared to the original rectified flow method.
The \textbf{\color{black}black} and \textbf{\color{red}red}, and \textbf{\blue{oth}\green{er}} lines represent the rectified flow (1-RF), the reflowed model (2-RF), and the $k$-SeqRF ($k=\{1, 2, 4, 6, 8, 12\}$ models, respectively. 
The starred points denote the performance of \emph{distilled} models.}
\label{fig:overall}
\end{figure}

\begin{figure}[!ht]
\centering
\includegraphics[width=0.49\linewidth]{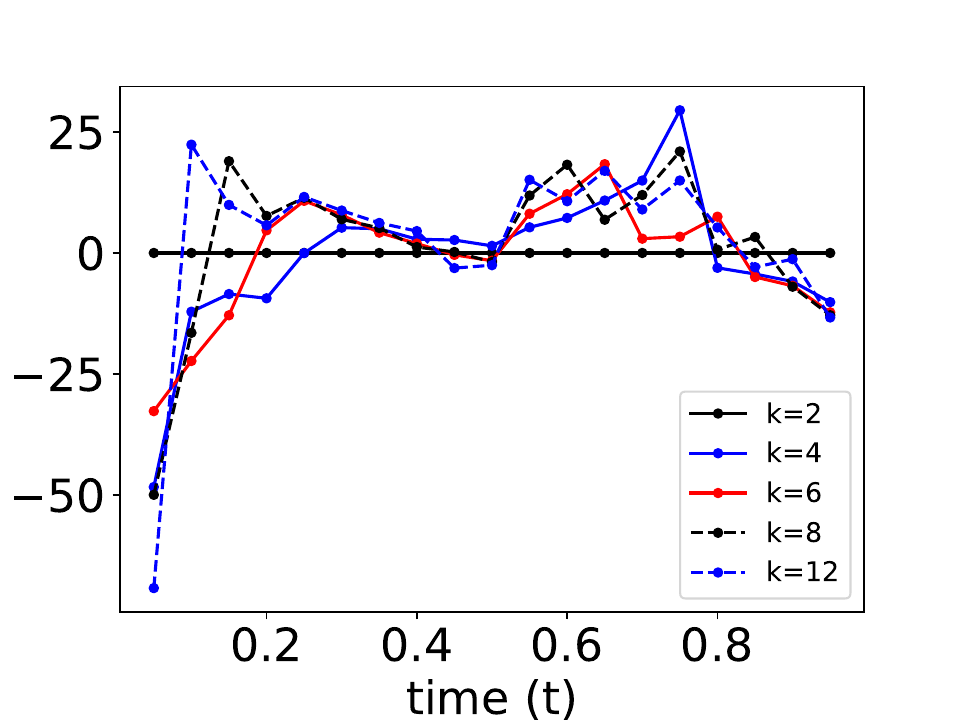} 
\includegraphics[width=0.49\linewidth]{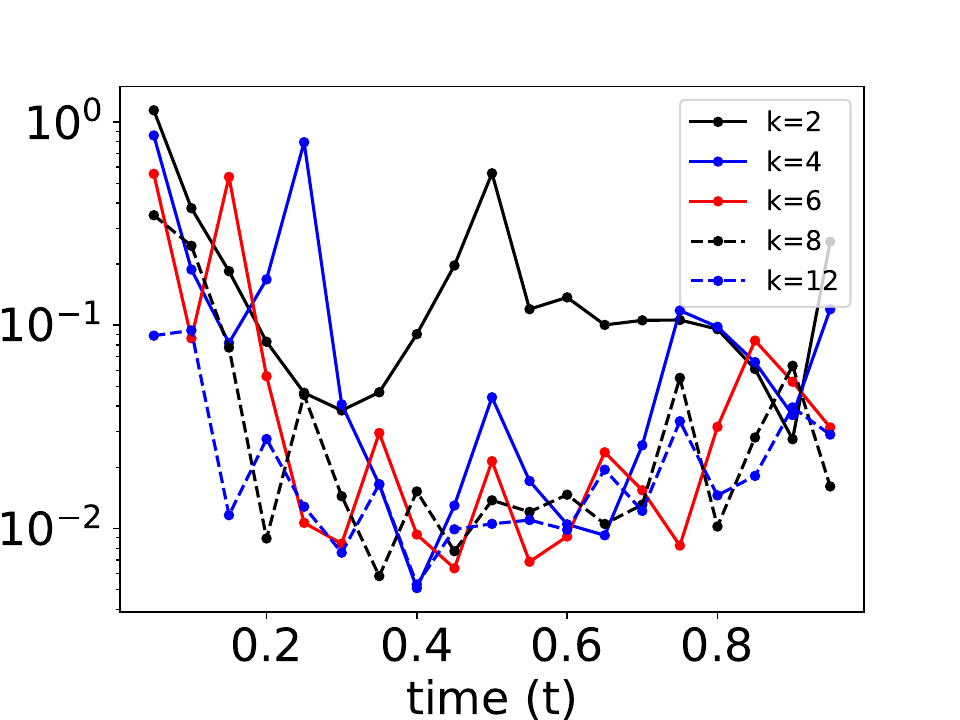} 
\caption{Empirical results on the (a) average Lipschitzness over $x_t$ and (b) \emph{$\norm{v_\theta(x_t, t)}_2^2$} over different $K$ in SeqRF.
In both two cases, the black line (2-SeqRF) with minimal segments has maximum values.}
\label{fig:ablation}
\end{figure}

\begin{figure*}[!ht]
\centering
\begin{tabular}{ccc}
\begin{subfigure}{0.32\textwidth}
\centering
\includegraphics[width=\linewidth]{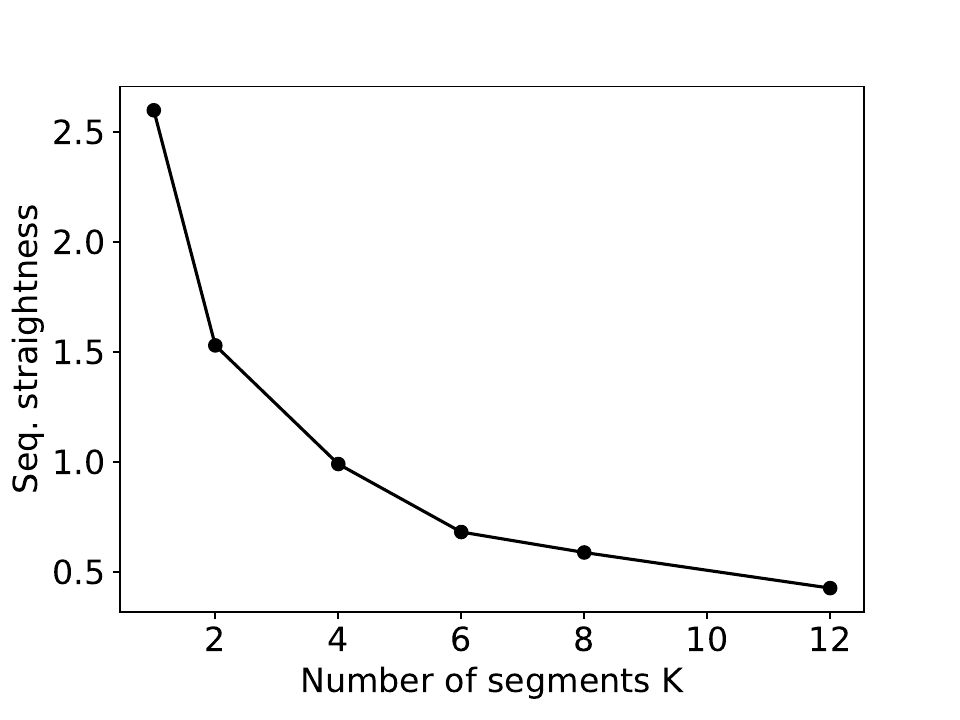}
\caption{Over the number of segments $K$.}
\label{fig:seq_str_k}
\end{subfigure}
&
\begin{subfigure}{0.32\textwidth}
\centering
\includegraphics[width=\linewidth]{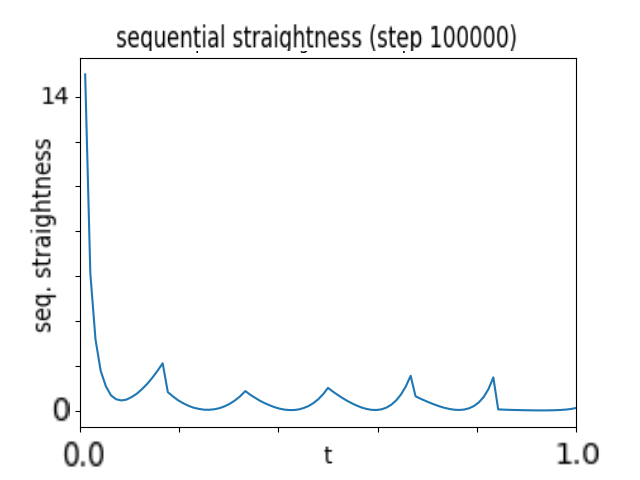}
\caption{Seq. straightness with $k=6$ over $t$.}
\label{fig:seq_str_t6}
\end{subfigure}
&
\begin{subfigure}{0.32\textwidth}
\centering
\includegraphics[width=\linewidth]{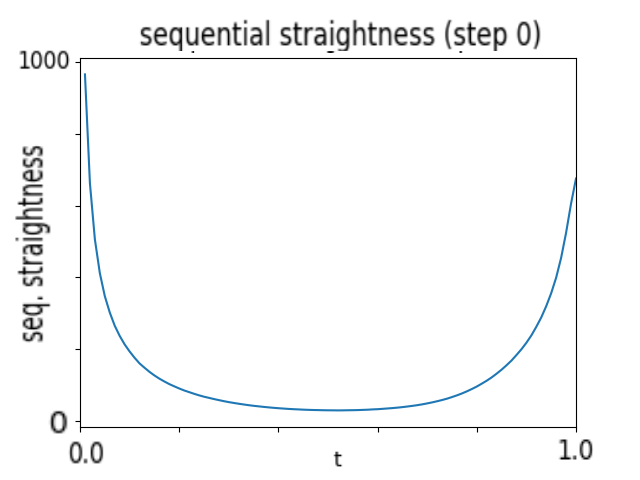}
\caption{Straightness of 1-RF over $t$.}
\label{fig:seq_str_tbaseline}
\end{subfigure}
\end{tabular}
\caption{
Sequential straightness result for CIFAR-10 dataset, with 100-step Euler solver.
$t=0$ and $t=1$ stand for near-data and near-noise regimes, respectively.
Straightness results for other $k$'s are introduced in~\cref{app:sec:seq_str_plot_t}.}\label{fig:seq_straightness}
\end{figure*}
\subsection{Sequential Reflow: Flow Straightening by Suppressing the Truncation Error}
\label{sec:seqrf}
In this section, we propose our new method, ~\emph{sequential reflow}, that improves the ODE-based generative models by suppressing the truncation error of the numerical solver, by refining the pre-trained~\gls{ode}~\citep{liu2023rf,liu2023instaflow,albergo2023stochastic,albergo2023interpolant}.
\cref{alg:train_srf} summarizes our~\gls{seqrf} method.

\paragraph{Generating Joint Distribution with Time Segmentation of the ODE Trajectory}
Let the \gls{ode}-\gls{ivp} problem be defined by~\cref{eq:ode_ivp}.
Then the entire time interval of this~\gls{ode} is given as $[a, b]$.
In~\cref{thm:truncation} of~\cref{sec:gte}, we found that the global truncation error of the solver is inversely proportional to the $(p-1)^\mathrm{th}$ power of the number of time segments $K$ for the solver having order $p$.
Inspired by this, we first divide the~\gls{ode} trajectory by $K$ segments, $\{a=t_0,t_1,\cdots, t_K=b\}$, to generate the joint distribution, where the~\emph{source} data point at time $t_k$ is sampled from the OT interpolant $p_{t_k}$ from~\cref{eq:ot_interpolant} with $\sigma_\mathrm{min}\to0$.
Then the joint distribution generated by our $K$-SeqRF method is as follows:
\[
(x_{t_k}, \hat{x}_{t_{k+1}}(x_{t_k}))
\sim
p_{t_k}(\cdot) \times \texttt{ODE-Solver}(x_{t_k},t_k, t_{k+1};v_\theta)
\label{eq:joint_seqrf}
\]
for $k\in[0:K-1]$, where \texttt{ODE-Solver} denotes the numerical solver on use, which can be any linear multistep method defined by~\cref{eq:linear_multistep}, such as Euler, Heun, or Runge-Kutta methods.

Our SeqRF differs from the Reflow method~\citep{liu2023rf,liu2023instaflow}, which also attempted to straighten the flow-based models by constructing joint distribution.
The Reflow method generates the joint distribution
\[(x_a, \hat{x}_b)\sim\calN(0,I)\times \texttt{ODE-Solver}(x_a,a,b;v_\theta),
\label{eq:joint_rf}\]
by solving~\gls{ode} from $a$ to $b$.
In particular, the reflow method is the special case of our method with $K=1$.
We visualized how SeqRF differs from the na\"ive reflow method in~\cref{fig:concept_seqrf_distill}.

\paragraph{Training with Joint Distribution}
Let the pair of data points drawn from the joint distribution with~\cref{eq:joint_seqrf} be $\{\pi(x_{t_K}, \hat{x}_{t_{K+1}}(x_{t_K}))\}_{i=0}^{K-1}$.
\[
\calL_\mathrm{seq}
&=
\bbE_{x_s,k}\left[\norm{v_\theta(x_s, s) - \frac{x_s - x_{t_{k-1}}}{t_i - t_{k-1}}}_2^2\right]
\label{eq:seqrf_obj}
\]

After training the probability path with joint distribution, we may fine-tune our model with distillation to move directly toward the whole time segment.
That is, the number of function evaluations is the same as the number of time segments after distillation finishes.
The distillation process is done as follows:
\begin{itemize}
\item
we use the same generated pairs of data points as in~\cref{eq:joint_seqrf} as the joint distribution, with the same objective~\cref{eq:seqrf_obj}.
\item 
Different from the primary SeqRF training process, We fix the time to the starting point of $t_k$ instead of uniformly training from $t\in (t_k, t_{k+1})$.
\end{itemize}
Then, by distilling the SeqRF model, we directly take advantage of the straightened probability flow of the previously trained SeqRF model.

\paragraph{Variance Reduction with Sequential Reflow}
\citet{pooladian2023multisample} provides the variance reduction argument that sheds light on using joint distribution as training data, as follows.
\begin{prop}[\citet{pooladian2023multisample}, Lemma 3.2]
Let $(x_a, x_b)$ be jointly drawn from the joint density $\pi=\pi_a\times\pi_b$.
And let the flow-matching objective from joint distribution be given as
\[
\calL_\mathrm{JCFM}
&=
\bbE_{x_a,x_b}\left[\norm{v_\theta(x_t, t)-u_t(x_t|x_1)}_2^2\right].
\]
Then the expectation of the total variance of the gradient at a fixed $(x,t)$ is bounded as
\begin{align}
&\bbE_{t,x}\left[\mathrm{Tr}(\mathrm{Cov}_{p_t(x_1|x)}
\left(
\grad_\theta \norm{v_\theta(x, t)-u_t(x|x_a)}_2^2
\right)
\right]\nonumber\\
&\quad \leq \max_{t,x}\norm{\grad_\theta v_\theta(x, t)}_2^2 \calL_\mathrm{JCFM}.
\label{eq:pooladian_bound}
\end{align}
\label{prop:joint}
\end{prop}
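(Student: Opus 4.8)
The plan is to reduce the claimed covariance bound to two elementary ingredients: the domination of a trace-covariance by a raw second moment, and sub-multiplicativity of the parameter Jacobian under the chain rule. First I would fix the location $x$ and the time $t$ and write the per-sample gradient as
\[
g(x_1) \defas \grad_\theta \norm{v_\theta(x,t) - u_t(x|x_1)}_2^2 = 2\,(\grad_\theta v_\theta(x,t))^\top\,(v_\theta(x,t) - u_t(x|x_1)),
\]
observing that at fixed $(x,t)$ the field $v_\theta(x,t)$ and its Jacobian $\grad_\theta v_\theta(x,t)$ are deterministic, so that all randomness under $p_t(x_1|x)$ is carried by the conditional target $u_t(x|x_1)$. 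This is exactly what makes the variance of $g$ computable in closed form through the variance of $u_t(x|x_1)$.

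Next I would invoke the identity $\mathrm{Tr}(\mathrm{Cov}(g)) = \bbE\norm{g}_2^2 - \norm{\bbE g}_2^2 \le \bbE\norm{g}_2^2$, so that it suffices to control the conditional second moment $\bbE_{p_t(x_1|x)}\norm{g(x_1)}_2^2$. Applying sub-multiplicativity of the spectral norm to the factored form of $g$ gives $\norm{g(x_1)}_2 \le 2\,\norm{\grad_\theta v_\theta(x,t)}_2\,\norm{v_\theta(x,t) - u_t(x|x_1)}_2$, hence
\[
\mathrm{Tr}(\mathrm{Cov}_{p_t(x_1|x)}(g)) \le 4\,\norm{\grad_\theta v_\theta(x,t)}_2^2\;\bbE_{p_t(x_1|x)}\norm{v_\theta(x,t) - u_t(x|x_1)}_2^2.
\]
I would then bound the Jacobian factor by its global maximum $\max_{t,x}\norm{\grad_\theta v_\theta(x,t)}_2^2$, pull it outside the remaining expectation, and take $\bbE_{t,x}$ on both sides. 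By the tower property, the inner conditional expectation against $p_t(x_1|x)$ recombines with the outer average over the time-$t$ marginal of $x$ to reproduce exactly $\calL_\mathrm{JCFM}$, giving the stated inequality up to the multiplicative constant discussed below.

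The argument is conceptually short, so the real work is bookkeeping, and two points deserve care. The first is the interpretation of $\norm{\grad_\theta v_\theta(x,t)}_2$: reading it as the spectral norm of the parameter-to-output Jacobian makes the sub-multiplicative step $\norm{J^\top y}_2 \le \norm{J}_2\norm{y}_2$ immediate, whereas a Frobenius reading would require Cauchy--Schwarz and could loosen the constant. The second, which I expect to be the main obstacle, is the constant arising from differentiating the squared norm: the chain rule contributes a factor $2$, which squares to $4$ in the second-moment step, so to land on the clean constant of the stated bound one must either absorb it into the normalization of the norm or adopt the $\tfrac12\norm{\cdot}_2^2$ convention for the loss. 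I would also take care to verify the conditional structure explicitly — that at fixed $(x,t)$ the covariance is genuinely taken over the posterior $p_t(x_1|x)$, with $x_1$ entering only through $u_t(x|x_1)$ — since this is precisely what licenses both the variance-versus-second-moment reduction and the final tower-property recombination into $\calL_\mathrm{JCFM}$.
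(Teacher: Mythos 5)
Your proposal is correct, and in fact the paper contains no proof of this proposition at all: it imports the statement verbatim from \citet{pooladian2023multisample} (their Lemma 3.2), and the proof given there is exactly your three-step argument — bound the trace of the covariance by the raw second moment, factor the per-sample gradient through the chain rule as $2(\grad_\theta v_\theta(x,t))^\top(v_\theta(x,t)-u_t(x|x_1))$, apply operator-norm sub-multiplicativity, and recombine via the tower property into the joint CFM loss. Your caveat about the constant is also on target: the chain-rule factor $2$ squares to $4$, so the sharp bound carries $4\max_{t,x}\norm{\grad_\theta v_\theta(x,t)}_2^2$ with the operator-norm reading of the Jacobian, and the constant-free form transcribed in the paper holds only after absorbing that factor into the normalization of the loss. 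Note also that the paper's transcription of the statement has slips your reading silently repairs — $u_t(x|x_a)$ should be $u_t(x|x_1)$, and writing $\pi=\pi_a\times\pi_b$ as a product defeats the purpose of the lemma, which concerns general couplings (for the independent product the result is just the variance bound for vanilla CFM).
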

This proposition implies that training from joint distributions, instead of the product of independent distributions, improves training stability and thus efficiently constructs the optimal transport (OT) mapping between two distributions.

\paragraph{Empirical Validation on Bounds}
$M_\mathrm{sup}(t)$ \eqref{eq:gte_wrt_lte} and $\norm{\grad_\theta v_\theta(x,t)}_2^2$ \eqref{eq:pooladian_bound} are key aspects for upper bounds of~\gls{gte}.
To show that SeqRF successfully mitigates the~\gls{gte} and obtain more exact solution of the solver, we empirically showed that $M_\mathrm{sup}(t)$ and $\norm{\grad_\theta v_\theta(x,t)}_2^2$ tend to decrease as $k$ increases.

\paragraph{Flow Straightening Effect with Sequential Reflow}
To validate how much the vector field approximates a single-step solver, \citet{liu2023rf} proposed a measure called~\emph{straightness} of a continuously differentiable process $\bZ=\{Z_t\}_{t=0}^1$ that defines the ODE~\eqref{eq:ode_ivp}, defined by
\[
S(\bZ)
&=
\int_0^1 \norm{(Z_1 - Z_0) - \frac{\dee}{\dee t}Z_t}^2 \dee t.
\]
$S(\bZ)$ is zero if and only if $v(x_t)$ is locally Lipschitz with zero dilation $M(t)$ almost surely.
Hence, this implies that having low straightness represents for low truncation error.

To generalize this to the stiff~\gls{ode} formulated by~\gls{seqrf}, we propose a metric named \emph{sequential straightness},
\[
S_\text{seq}(\calZ)
&=
\sum_{i=1}^K \int_{t_{i-1}}^{t_i} \norm{\frac{Z_{t_i}^i - Z_{t_{i-1}}^i}{t_i - t_{i-1}} - \frac{\dee}{\dee t} Z_t^i}^2 \dee t,
\]
where $\calZ = \{\bZ^i\}_{i=1}^K$ is a collection of the time-segmented processes.
Like $S(\bZ)$, zero $S_\text{seq}(\calZ)$ denotes that each time-segmented process is completely straight, and this intends that the sequential straightness is zero if and only if the dilation $M(t)$ is zero almost surely for all $t\in(a,b)$.

\cref{fig:seq_straightness} demonstrates the sequential straightness of the rectified flow models trained on CIFAR-10 datasets for the number of segments, where the number of reflow and distillation pairs $1$M.
\cref{fig:seq_str_k} demonstrates that the sequential straightness lowers in both datasets as we train with finer intervals, implying that we achieved a superior straightening performance by using the SeqRF method.
For instance, \cref{fig:seq_str_t6} displays the straightness with respect to time at $6$-SeqRF, from near-noise regime at time $0$ to near-data at time $1$.

\section{Related work}\label{sec:related}

We introduce the most relevant works to our paper in the main section.
For additional related works, please refer to~\cref{app:sec:more_related_works}.

\paragraph{Flow Matching}
Recently, straightening the continuous flow by mitigating the curvature of the probability path has been considered by considering the optimal transport regularization by introducing~\gls{cnf} norms~\citep{finlay2020node,onken2021otflow,bhaskar2022curvature} for training neural ODE, and~\citet{kelly2020learning} learned the straightened neural ODEs by learning to decrease the surrogate higher-order derivatives.
\citet{lee2023minimizing} showed that minimizing the curvature in flow matching is equivalent to the $\beta$-VAE in variational inference.
As a generalization of finding optimal transport interpolant, \citet{albergo2023interpolant,albergo2023stochastic} proposed stochastic interpolant, which unifies the flows and diffusions by showing that the probability density of the noisy interpolant between two distributions satisfies the Fokker-Planck equations of an existing diffusion model.

\paragraph{Variance Reduction with Optimal Coupling}
Our work can also be interpreted to match the noise and data distribution pairs to make training more efficient by reducing training variance.
\citet{pooladian2023multisample} used the minibatch coupling to generate joint distributions in a simulation-free manner by constructing a doubly stochastic matrix with transition probabilities and obtaining coupling from this matrix.
Even though this generates one-to-one matching between noise and images, this is numerically intractable when the number of minibatch increases.
\citet{tong2023improving} also introduced the concept of minibatch optimal transport by finding optimal coupling within a given minibatch.
\citet{tong2023simulation} further expanded this approach to generalized flow (e.g., Schr\"odinger bridge) in terms of stochastic dynamics.

\section{Experiments}\label{sec:experiment}

\begin{table}[!ht]
  \small
  \centering
  \caption{
  CIFAR-10 image generation result. 
  For $\{$NFE, IS, KID$\}$/$\{$IS$\}$, lower and higher value represents better result, respectively.
  The RK45-$\{\mathrm{tol}\}$ solver denotes the Runge-Kutta solver of order 4 with absolute and relative tolerance $\mathrm{tol}$.
  The KID measure is divided by $10^4$.
  For $k$-SeqRF method, we choose the NFE where the FID converges.
  The full FID result over NFEs are demonstrated in~\cref{fig:overall}.
  We compared ours to DDIM \citep{song2021ddim}, DPM-Solver \citep{lu2022dpmsolver} and PNDM \citep{liu2022pndm}.}
  \begin{tabular}{lrrrrl}
  \toprule
  \multicolumn{6}{l}{\textbf{Flow-based methods}} \\
  \midrule
  Method & FID & IS & KID  & NFE  & Solver \\
  \midrule
  1-RF &62.173 & 9.44 & 64.70& 20  & RK45-$10^{-2}$ \\
       & 3.176 & 9.81 & 9.63 & 50  & RK45-$10^{-3}$ \\
       & 2.607 & 9.60 & 7.62 & 100 & RK45-$10^{-4}$ \\
       & 2.593 & 9.60 & 7.49 & 254 & RK45-$10^{-5}$ \\
  \midrule
  \multicolumn{6}{l}{1-RF-Distill~\citep{liu2023rf}} \\
  \hspace{2mm}$k=1$
       & 4.858 & 9.08 & 17.56 & 1 &   \\
  \hspace{2mm}$k=2$
       & 4.710 & 9.06 & 23.72 & 2 & - \\
  \hspace{2mm}$k=4$
       & 4.210 & 9.16 & 19.66 & 4 & - \\
  \hspace{2mm}$k=6$
       & 4.018 & 9.14 & 18.28 & 6 & - \\
  \hspace{2mm}$k=8$
       & 3.822 & 9.22 & 15.35 & 8 & - \\
  \hspace{2mm}$k=12$
       & 3.772 & 9.24 & 15.65 & 12 & - \\
  \midrule
  2-RF & 28.077& 9.27 & 304.1 & 21 & RK45-$10^{-2}$ \\
       & 3.358 & 9.27 & 107.9 & 50 & RK45-$10^{-3}$ \\
  \midrule
OT-CFM & 20.86 &    - &     - & 10 & Euler \\
 I-CFM & 10.68 &    - &     - & 50 & Euler \\
  \midrule
  \multicolumn{6}{l}{\textbf{$k$-SeqRF} (Ours)} \\
  \hspace{2mm}$k=2$
       & 3.377 & 9.43 & 12.04 & 30 & Euler \\
  \hspace{2mm}$k=4$
       & 3.269 & 9.44 & 11.34 & 24 & Euler \\
  \hspace{2mm}$k=6$
       & 3.191 & 9.46 & 11.70 & 40 & Euler \\
  \hspace{2mm}$k=8$
       & 3.221 & 9.47 & 11.46 & 24 & Euler \\
  \hspace{2mm}$k=12$
       & \textbf{3.184} & \textbf{9.53} & 11.25 & 24 & Euler \\
  \midrule
  \multicolumn{6}{l}{\textbf{$k$-SeqRF-Distill} (Ours)} \\
  \hspace{2mm}$k=2$
       & 4.081 & 9.28 & 19.56 & 2 & - \\
  \hspace{2mm}$k=4$
       & 3.297 & 9.38 & 13.06 & 4 & - \\
  \hspace{2mm}$k=6$
       & \textbf{3.192} & 9.43 & 12.58 & 6 & - \\
  \hspace{2mm}$k=8$
       & 3.199 & 9.44 & 12.68 & 8 & - \\
  \hspace{2mm}$k=12$
       & 3.207 & \textbf{9.52} & \textbf{12.17} & 12 & - \\
  \bottomrule
  \toprule
  \multicolumn{6}{l}{\textbf{Diffusion-based methods}}\\
  \midrule
DDIM & 4.67 & - & - & 50 & Ancestral \\
PNDM       & 4.61 & - &- & 20 & PNDM \\
DPM-Solver & 5.28 & - & - & 12 & RK45 \\
  \bottomrule
  \end{tabular}
\label{tab:c10_main}
\end{table}

%%%%%%%%%%%%%%%%%%%%%%%%%%%

\begin{table}[t]
  \small
  \centering
  \caption{
  The image generation result for CelebA-$64\times 64$ and LSUN-Church-$256\times 256$ datasets. 
  The KID measure is divided by $10^4$ and $10^2$ for CelebA and LSUN-Church datasets, respectively.
  E-M stands for Euler-Maruyama solver.}
  \begin{tabular}{lrrrl}
  \toprule
  \multicolumn{5}{l}{\textbf{CelebA $64\times 64$}} \\
  \midrule
  \multicolumn{5}{l}{\textbf{$k$-SeqRF} (Ours)} \\
  \midrule
  Method & FID & KID  & NFE  & Solver \\
  \midrule
  1-RF              & 2.327 & 12.95 & 113 & RK45-$10^{-4}$\\
  2-RF              & 5.841 & 38.01 & 50 & RK45-$10^{-4}$ \\
                    & 8.457 & 55.84 & 2 & Euler \\
  \midrule
  \multicolumn{5}{l}{\textbf{$k$-SeqRF} (Ours)} \\
  \hspace{2mm}$k=2$ & 4.672 & 29.28 & 2  & Euler \\
  \hspace{2mm}$k=4$ & 5.322 & 38.02 & 4  & Euler \\
  \hspace{2mm}$k=2$ & 3.878 & 25.56 & 52 & RK45-$10^{-3}$\\
  \hspace{2mm}$k=4$ & 5.268 & 36.98 & 68 & RK45-$10^{-3}$ \\
  \bottomrule
  \toprule
  \multicolumn{5}{l}{\textbf{Diffusion-based Methods}} \\
  \citep{song2021ddim} & 6.77 & - & 50 & Ancestral \\
  \citep{ho2020ddpm} & 45.20 & - & 50 & E-M \\
  DiffuseVAE & 5.43 & - & 50 & Euler \\
  \bottomrule
  \toprule
  \multicolumn{5}{l}{\textbf{LSUN-Church $256\times 256$}} \\
  \midrule
  Method & FID & KID  & NFE  & Solver \\
  \midrule
  1-RF (Pre-trained)&315.317 & 35.06 & 2 & Euler \\
                    &124.461 & 12.62 & 5 & \\
                    & 45.042 & 4.090 & 10 & \\
                    % &        &       & 25 & \\
  2-RF              &104.417 & 11.99 & 2 & Euler\\
                    & 76.375 & 13.84 & 5 & \\
                    & 45.747 & 4.635 & 10 & \\
                    % & 34.786 & 3.265 & 25 & \\
  \midrule
  \multicolumn{5}{l}{\textbf{$k$-SeqRF} (Ours)} \\
  \hspace{2mm}$k=2$ &104.417 & 11.99 & 2 & Euler\\
                    & 56.569 & 5.922 & 5 & \\
                    & 45.747 & 4.635 & 10 & \\
                    % & 34.146 & 3.275 & 25 & \\
  \hspace{2mm}$k=4$ & 35.833 & 3.601 & 4 & Euler \\
                    & 30.883 & 2.959 & 5 & \\
                    & 28.066 & 2.632 & 10 & \\
                    % & 25.276 & 2.294 & 25 & \\
  \bottomrule
  \end{tabular}
\label{tab:others_main}
\end{table}

In this section, we take empirical evaluations of SeqRF, compared to other generative models, especially with diffusion and flow-based models in CIFAR-10, CelebA and LSUN-Church datasets.
We used the \texttt{NCSN++} architecture based on the \texttt{score\_sde} \footnote{\texttt{\href{https://github.com/yang-song/score\_sde}{github.com/yang-song/score\_sde}}} repository.
Each model is trained with \texttt{JAX/Flax} packages on TPU-v2-8 (CIFAR-10) and TPU-v3-8 (CelebA, LSUN-Church) nodes.
In CIFAR-10 and CelebA datasets, we first generated 1M and 300k reflow pairs to train our SeqRF models and for distillation. And for LSUN-Church dataset, we instead used 100k reflow pairs.
We did an ablation on the number of reflow pairs in \cref{app:sec:n_reflow}.
In addition, the whole experimental details including the architectural design and hyperparameters are described in~\cref{app:experiment}.

\subsection{Image Generation Result}
We demonstrate our image synthesis quality in~\cref{tab:c10_main} in CIFAR-10, compared to existing flow matching methods including rectified flow~\citep{liu2023rf}, conditional flow matching~\citep{lipman2023flow}, I-CFM and OT-CFM~\citep{tong2023improving}.
For the baseline rectified flow, we imported the pre-trained model from the \texttt{RectifiedFlow}\footnote{\texttt{\href{https://github.com/gnobitab/RectifiedFlow}{github.com/gnobitab/RectifiedFlow}}} repository, and re-implemented the method using \texttt{JAX/Flax} packages.
For a fair comparison, we report the \texttt{JAX/Flax} results in our experiments.
\footnote{The baseline FID is $0.01-0.02$ higher than reported result measured with \texttt{PyTorch}-based code.}
Our generation result surpasses existing flow-based methods and diffusion models, achieving $3.191$ FID with $6$ steps with distillation.
This not only surpasses existing diffusion model approaches, but also the existing distillation methods for flow-based models.

We present the image generation results for CelebA and LSUN-Church datasets in \cref{tab:others_main}.
\cref{fig:cifar_images} displays some non-curated CIFAR-10 images generated from the same random seed with $k$-SeqRF models.
Please refer to~\cref{app:sec:images} for further image examples such as LSUN-Church and CelebA datasets.

\section{Conclusion and discussion}\label{sec:conclusion}
% We have proposed sequential reflow, a simple and efficient method to straighten the flow-based generative model that first divides the time domain into multiple segments, and then generates joint distribution by traversing over the partial time domain.
% By dividing the time domain, we mitigate the global truncation error of the ODE solver and further generate a straighter path, which results in faster and more efficient sampling.

We introduce sequential reflow, a straightforward and effective technique for rectifying flow-based generative models.
This method initially subdivides the time domain into multiple segments and subsequently generates a joint distribution by traversing over partial time domains.
Through this, we successfully alleviate the global truncation error associated with the ODE solver.
Consequently, this process yields a straighter path, thereby enhancing the efficiency and speed of the sampling procedure.

Although we have focused on the generative modeling problem, our approach can be more broadened to other fields such as image-to-image translations, and latent generative models which can cover large-scale datasets.

% \pagebreak
% \newpage
\paragraph{Ethical aspects.}
Among the datasets we have used, CelebA datasets have biased attributes, such as containing more white people than black people, and more males than females.

\paragraph{Future societal consequences.}
Our paper proposes a new algorithm that contributes to the field of generative model research, especially flow-based generative models.
Flow-based models can be applied to the broader area of artificial intelligence, such as image-to-image translation that can be abused for Deepfake.

\bibliographystyle{unsrtnat}
\bibliography{references}

%%%%%%%%%%%%%%%%%%%%%%%%%%%%%%%%%%%%%%%%%%%%%%%%%%%%%%%%%%%%%%%%%%%%%%%%%%%%%%%
%%%%%%%%%%%%%%%%%%%%%%%%%%%%%%%%%%%%%%%%%%%%%%%%%%%%%%%%%%%%%%%%%%%%%%%%%%%%%%%
% APPENDIX
%%%%%%%%%%%%%%%%%%%%%%%%%%%%%%%%%%%%%%%%%%%%%%%%%%%%%%%%%%%%%%%%%%%%%%%%%%%%%%%
%%%%%%%%%%%%%%%%%%%%%%%%%%%%%%%%%%%%%%%%%%%%%%%%%%%%%%%%%%%%%%%%%%%%%%%%%%%%%%%
\newpage
\appendix
\onecolumn

\appendix
\section{The overall algorithm of Sequential Reflow (SeqRF)}
\begin{algorithm*}[!ht]
\caption{Training Sequential Rectified Flow for Generative Modeling}
\label{alg:train_srf}
\begin{algorithmic}
\REQUIRE{Data distribution $\pi_\text{data}$, Noise distribution $\pi_\text{noise}$, data dimension $D$, Number of divisions $K$, Flow network model $v_\theta:\bbR^D\to\bbR^D$, Time division $\{t_i\}_{i=0}^{K}$, $a=t_0<t_1<\cdots<t_{K-1}<t_K=b$.}
\vspace{0.5mm}
\STATE{\textbf{Stage 1. Pre-training rectified flow}}
\vspace{0.5mm}
\WHILE{Not converged}
  \STATE{Draw $(X_0,X_1)\sim\pi_\text{data}\times\pi_\text{noise}$, $t\in(0, 1)$.}
  \STATE{$X_t\gets (1-t) X_0 + t X_1.$}
  \STATE{Update $\theta$ to minimize $\bbE_{X_t,t}\left[\norm{v_\theta(X_t, t) - (X_1 - X_0)}_2^2\right]$}\COMMENT{Learning the flow network}
\ENDWHILE
\vspace{0.5mm}
\STATE{\textbf{Stage 2. Sequential reflow + Distillation with segmented time divisions}}
\vspace{0.5mm}
\WHILE{Not converged}
  \STATE{Sample $x_{t_i}=(1-t_k)x_a + t_k x_b$, $(x_a, x_b)\sim\pi_\mathrm{noise}\times\pi_\mathrm{data}$, $t_i, k\in[0:K-1]$}
  \STATE{Obtain $\hat{x}_{t-1}$ by running $\dee X_t = v_\theta(X_t, t) \dee t$ with an ODE solver from $t_i\to t_{i-1}$.}
  \IF{\texttt{Reflow}}
  \STATE{$x_s\gets (1-r) \hat{x}_{t_{i-1}} + r x_{t_i}$, $s=t' + (t_i-t_{i-1})r$, $r\sim\calU(0,1)$}
  \ELSIF{\texttt{Distill}}
  \STATE{$X_s\gets X_{t_i}$, $s=t$}
  \ENDIF
  \STATE{Update $\theta$ to minimize $\bbE_{x_s,r}\left[\norm{v_\theta(x_s, s) - \frac{x_s - x_{t_{k-1}}}{t_i - t_{i-1}}}_2^2\right]$}\COMMENT{Learning with sequential reflow}
\ENDWHILE
\end{algorithmic}
\end{algorithm*}

\section{Experimental Details}\label{app:experiment}
\subsection{Dataset Description}
\paragraph{CIFAR-10}
The CIFAR-10 dataset is the image dataset that consists of 10 classes of typical real-world objects with $50,000$ training images and $10,000$ test images with $32\times32$ resolution.
In our experiments, we did not use the image labels and constructed unconditional generative models.
\paragraph{CelebA}
The CelebA dataset is the face dataset that consists of $202,599$ training images from $10,177$ celebrities with 40 binary attribute labels, with size $178\times218$.
In our experiment, we resized and cropped the image to $64\times64$ resolution to unify the input shape of the generative models.
Also, we did not use the attribute labels and constructed unconditional generative models.
\paragraph{LSUN-Church}
The Large-Scale Scene UNderstanding (LSUN)-Church(-Outdoor) dataset is the dataset that consists of the church images as well as the background which surrounds them.
This data consists of $126,227$ images with $256\times 256$ resolution.

\subsection{Details on Training Hyperparameters}
\begin{table*}[!ht]
\centering
\caption{The hyperparameter used to train our model.}\vspace{-2mm}
\begin{tabular}{lccccc}
\\
\toprule
 & CIFAR-10 & CelebA & LSUN-Church\\
\midrule
Channels & 128 & 128 & 128 \\
Depth & 3 & 3 & 4 \\
Channel multiplier & (1, 2, 2, 2) & (1, 2, 2) & (1, 1, 2, 2, 2, 2, 2) \\
Heads & 4 & 4 & 4 \\
Attention resolution & 16 & 16 & 16 \\
Dropout & 0.15 & 0.1 & 0.1 \\
Batch size & 512 & 64 & 64 \\
Baseline steps & $800,000$ & $300,000$ & $600,000$ \\
Reflow steps & $100,000$ & $100,000$ & $200,000$ \\
Jitted steps & 5 & 5 & 5 \\
Reflow images & 1M & 200k & 1M \\
EMA rate & 0.9999 & 0.9999 & 0.999 \\
Learning rate (Adam) & $2\times 10^{-4}$ & $2\times 10^{-4}$ & $2\times 10^{-4}$\\
$(\beta_1, \beta_2)$ (Adam) & $(0.9, 0.999)$ & $(0.9, 0.999)$ & $(0.9, 0.999)$ \\
\bottomrule\label{tab:hyper}
\end{tabular}
\end{table*}
We report the hyperparameters that we used for training in~\cref{tab:hyper}.
We used a 32-bit precision floating point number for training all datasets.

\subsection{Performance Metrics}
We used the~\texttt {tfgan} package to measure the following metrics.
\paragraph{Frech\'et Inception Distance (FID)}
The Frech\'et Inception Distance (FID) uses the third pooling layer of the \texttt{Inceptionv3} network of the ground-truth image and generated images as the intermediate feature to interpret.
Let the mean and covariance of the ground-truth images be $(\mu_g,\Sigma_g)$ and those of the generated images as $(\mu_x,\Sigma_x)$.
Then the FID is defined by
\[
\mathrm{FID}(x)=\norm{\mu_x-\mu_g}_2^2 + \sqrt{\mathrm{Tr}\left( \Sigma_x+\Sigma_g-2(\Sigma_x \Sigma_g)^{\frac{1}{2}} \right)}.
\]
This represents the fidelity of the generated images, comparing the embedding distribution between the generated and ground-truth images.

\paragraph{Inception Score (IS)}
The inception score (IS) is defined by
\[
\exp
\left(
\bbE_x \bbE_{y|x}
\left[
\log\frac{p(y|x)}{p(y)}
\right]
\right),
\label{eq:is}
\]
where $x$ and $p(y|x)$ are the image data and the probability distribution obtained from $x$ using the~\texttt{InceptionV3}~\citep{szegedy2016inception} architecture.
The Inception Score represents the diversity of the images created by the model in terms of the entropy for ImageNet classes.

\paragraph{Kernel Inception Distance (KID)}
The kernel inception distance (KID) is the maximum mean discrepancy (MMD) metric on the intermediate feature space.
Let the distribution of the ground-truth and generated images be $p_g$ and $p_x$, respectively.
Then the KID is defined by
\[
\mathrm{KID}
&=
\bbE_{x_1, x_2\sim p_x}\left[ k(x_1, x_2) \right]
+
\bbE_{x_1,x_2\sim p_g}\left[ k(x_1, x_2) \right]
- 2
\bbE_{x_1\sim p_x, x_2\sim p_g}\left[  k(x_1, x_2) \right],
\]
where the similarity measure $k(x_1,x_2)$ is defined by
\[
k(x_1,x_2)
&=
(x_1\cdot x_2)^3.
\]
%%%%%%%%%%%%%%%%%%%%%%%%%%%%%%%%%%%%%%%%%
\section{Additional ablation studies}
\paragraph{EMA rate}
Since the flow matching and rectified flow is much harder to converge than the conventional diffusion model because of matching pairs of the independent distributions, we tuned the EMA rate in two ways:
\begin{itemize}
\item
We have increased the EMA rate to $0.999999$, higher than other models.
(Table) shows the ablation study on the EMA rate of the image synthesis quality from  CIFAR-10 datasets from baseline 1-rectified flow model for a 1000-step Euler solver.
\item
\textbf{Warm-up training policy.}\quad
Fixing the EMA rate high makes converging the model almost impossible because of the extremely high momentum.
So, we introduce the warmup phase with respect to the training step as
\[
\texttt{EMA\_rate=min((1 + step) / (10 + step), decay)}
\]
where \texttt{decay} and \texttt{step} are the given EMA rate and the training steps, respectively.
Introducing this warmup phase further improved the FID of our implementation in 1.3M steps and 128 batch size to \textbf{3.03} in CIFAR-10 dataset generation using a 1,000-step Euler solver.
\end{itemize}
\begin{table*}[!ht]
\centering
\caption{The performance of the 1-rectified flow model with an Euler solver with 250 uniform timesteps, trained with batch size 128, 1.3M steps and different EMA rates.
The model does not converge with EMA rate 0.999999.}\vspace{-2mm}
\begin{tabular}{lccccc}
\\
\toprule
EMA rate & 0.999 & 0.9999 & 0.99999 & 0.999999 & Warm-up \\
\midrule
FID & 5.78 & 4.77 & 3.91 & - & \textbf{3.03} \\
\bottomrule
\end{tabular}
\end{table*}

\subsection{Ablation on the number of reflow datasets}\label{app:sec:n_reflow}
\citet{liu2023rf} had reported the proper amount of number of reflow pairs for convergence should be at least 1M.
For completeness, we report how the image generation performance drops and overfits with a smaller number of reflow datasets.
With a small number of reflow pairs such as 10k, the performance is lower than the more-data cases and even becomes worse as the number of training steps elapses.

\begin{figure}[!ht]
\centering
\begin{subfigure}{0.49\textwidth}
\centering
\includegraphics[width=.99\linewidth]{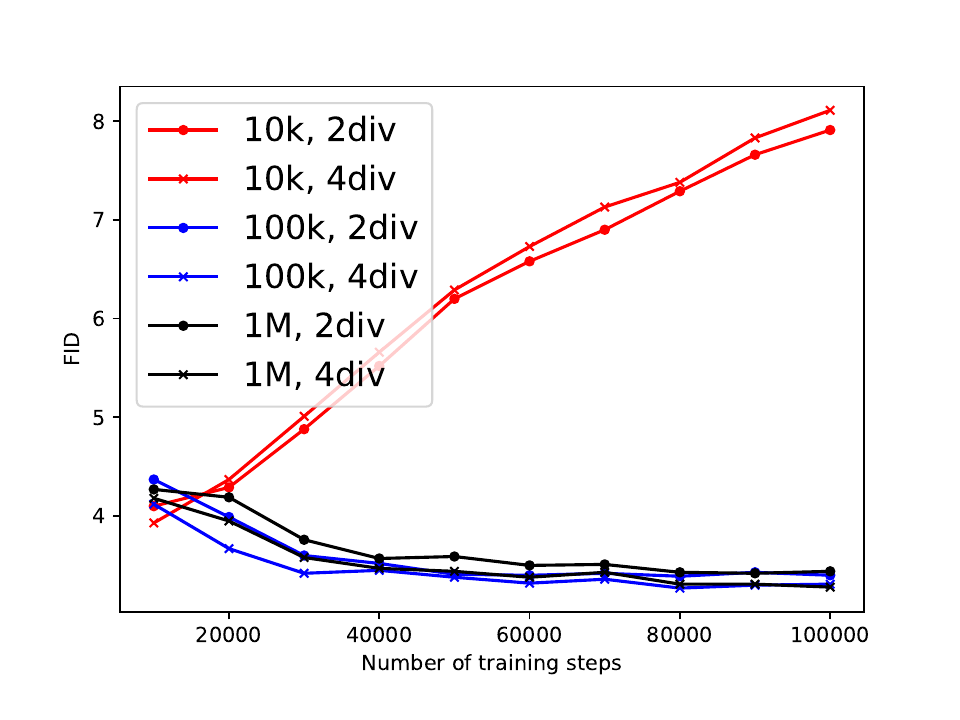}
\end{subfigure}
\begin{subfigure}{0.49\textwidth}
\centering
\includegraphics[width=.99\linewidth]{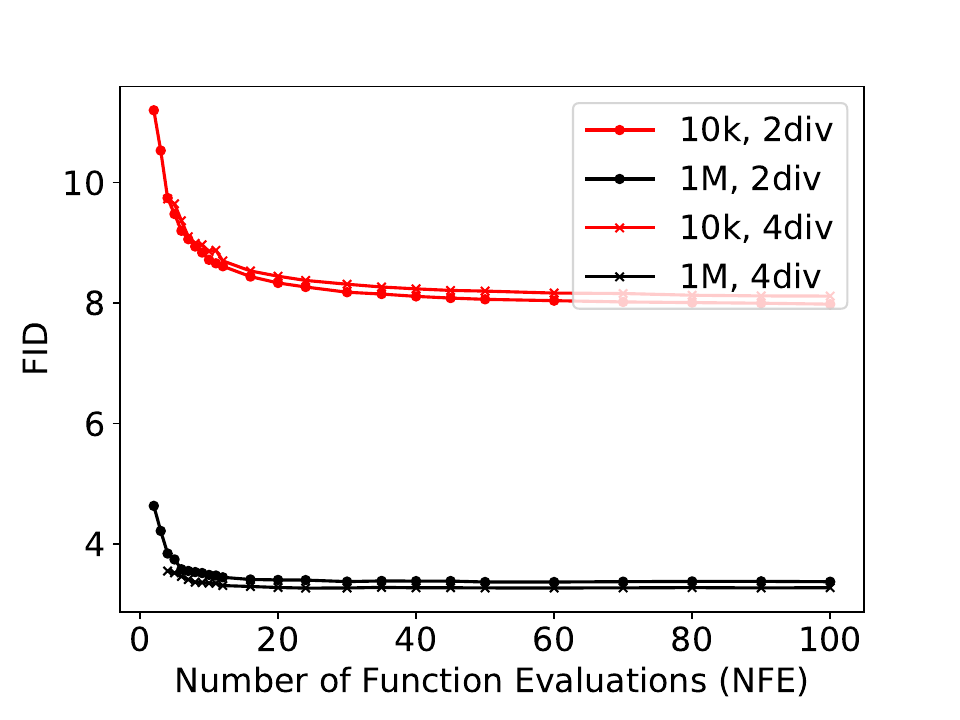}
\end{subfigure}
\caption{The generation performance of CIFAR-10 datasets with respect to the number of reflow datasets.
\textbf{Left:} Number of training steps vs. FID,
\textbf{Right:} Number of function evaluations vs. FID.}
\label{fig:test}
\end{figure}
%%%%%%%%%%%%%%%%%%%%%%%%%%%%%%%
\clearpage
\section{Example Images}
\subsection{Uncurated CIFAR-10 Images Generated by SeqRF-Distill Models}
\begin{figure*}[!ht]
\centering
\begin{center}
\begin{subfigure}[b]{0.8\textwidth}\label{fig:cifar}
\setlength{\lineskip}{0pt}
\centering
\includegraphics[width=\linewidth]{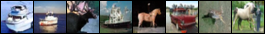}
\includegraphics[width=\linewidth]{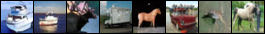}
\includegraphics[width=\linewidth]{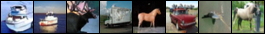}
\includegraphics[width=\linewidth]{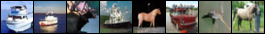}
\includegraphics[width=\linewidth]{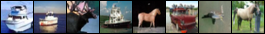}
\end{subfigure}
\end{center}
\caption{
Non-curated CIFAR-10 image synthesis result with $k$-SeqRF after distillation. From \textbf{up} to \textbf{down}: \{2, 4, 6, 8, 12\}-step Euler solver, each with an FID score of 
\{4.08, 3.30, 3.19, 3.20, 3.21\}, respectively.}
\label{fig:cifar_images}
\end{figure*}

\subsection{Uncurated SeqRF Datasets for Training CIFAR-10 Models.}
We provide examples of SeqRF (or the na\"ive reflow method) datasets used to train CIFAR-10 datasets in~\cref{fig:reflow}.
The images in the upper row represent the original images used to generate the source image in the middle row.
The images in the lower row represent the images generated by the \texttt{ODE-Solver}, initiated from the source image at the middle row.
\begin{figure}[!ht]
\centering
\begin{tabular}{cc}
\begin{subfigure}{0.45\textwidth}
\setlength{\lineskip}{-1pt}
\centering
\includegraphics[width=.99\linewidth]{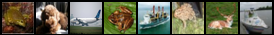}
\includegraphics[width=.99\linewidth]{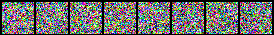}
\includegraphics[width=.99\linewidth]{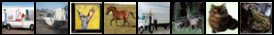}
\caption{No segment.}
\end{subfigure}
&
\begin{subfigure}{0.45\textwidth}
\setlength{\lineskip}{-1pt}
\centering
\includegraphics[width=.99\linewidth]{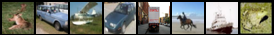}
\includegraphics[width=.99\linewidth]{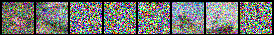}
\includegraphics[width=.99\linewidth]{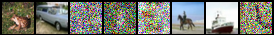}
\caption{2 segments.}
\end{subfigure}
\\
\begin{subfigure}{0.45\textwidth}
\setlength{\lineskip}{-1pt}
\centering
\includegraphics[width=.99\linewidth]{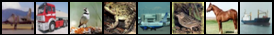}
\includegraphics[width=.99\linewidth]{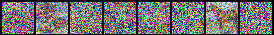}
\includegraphics[width=.99\linewidth]{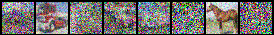}
\caption{4 segments.}
\end{subfigure}
&
\begin{subfigure}{0.45\textwidth}
\setlength{\lineskip}{-1pt}
\centering
\includegraphics[width=.99\linewidth]{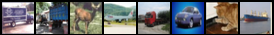}
\includegraphics[width=.99\linewidth]{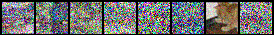}
\includegraphics[width=.99\linewidth]{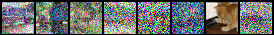}
\caption{6 segments.}
\end{subfigure}
\\
\begin{subfigure}{0.45\textwidth}
\setlength{\lineskip}{-1pt}
\centering
\includegraphics[width=.99\linewidth]{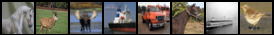}
\includegraphics[width=.99\linewidth]{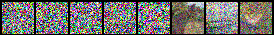}
\includegraphics[width=.99\linewidth]{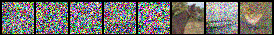}
\caption{8 segments.}
\end{subfigure}
&
\begin{subfigure}{0.45\textwidth}
\setlength{\lineskip}{-1pt}
\centering
\includegraphics[width=.99\linewidth]{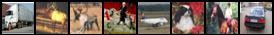}
\includegraphics[width=.99\linewidth]{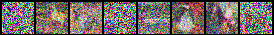}
\includegraphics[width=.99\linewidth]{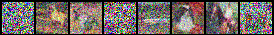}
\caption{12 segments.}
\end{subfigure}
\end{tabular}
\caption{
Sequential reflow dataset for CIFAR-10.
The \{upper, middle, lower\} rows represent the original data, source data (noisy original data), and destination data (ODE solver output from the noisy original data), respectively.
\cref{eq:joint_seqrf} describes the detailed procedure of constructing the (source, destination) pair.
}
\label{fig:reflow}
\end{figure}

\subsection{Uncurated LSUN-Church Images Generated by SeqRF Models}\label{app:sec:images}
\cref{fig:lsun_generated} demonstrates the images generated from rectified flow models that are pre-trained from~\citet{liu2023rf}, and from our proposed $k$-SeqRF models.
Except for $k=4$ case (4-SeqRF) which requires at least 4 steps to generate images, we presented images with NFE=(2, 5, 10), from top to bottom rows.
For reproducibility issues, all images are generated from the \textbf{same seed} of uniform Gaussian noises.
 
\begin{figure}[!ht]
\centering
\begin{tabular}{cc}
\begin{subfigure}{0.45\textwidth}
\setlength{\lineskip}{0pt}
\centering
\includegraphics[width=.99\linewidth]{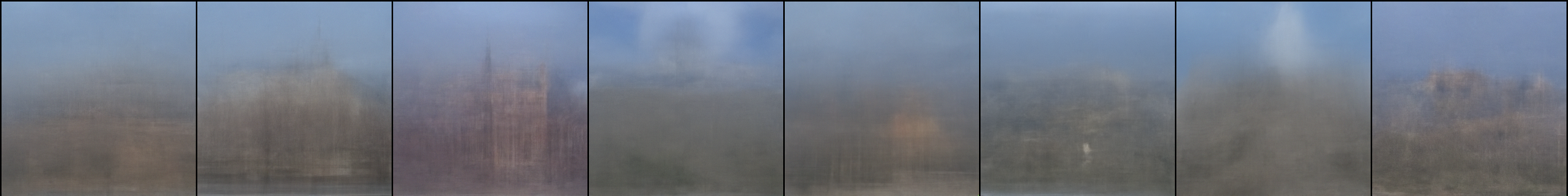}
\includegraphics[width=.99\linewidth]{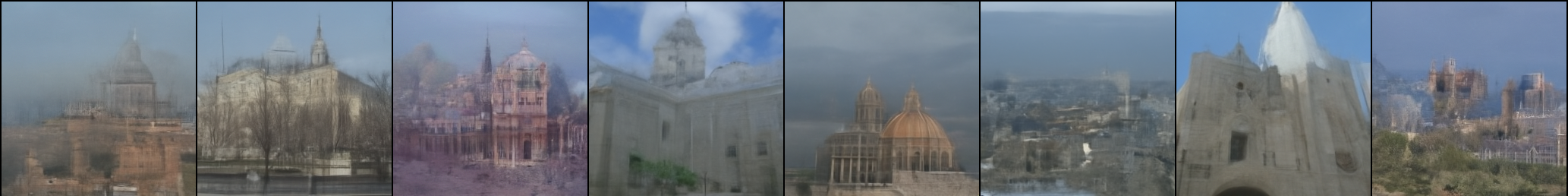}
\includegraphics[width=.99\linewidth]{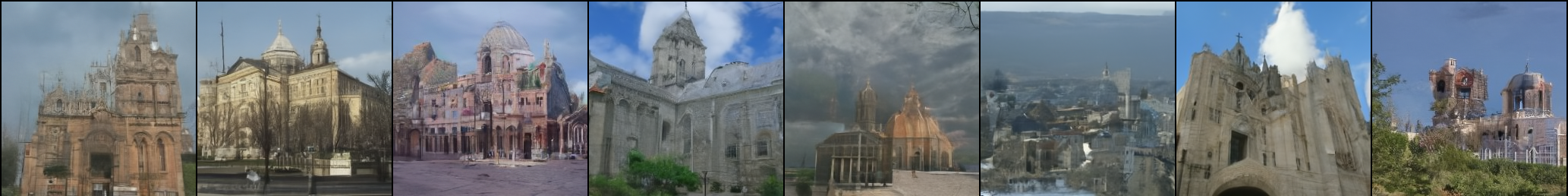}
\captionsetup{justification=centering}
\caption{1-RF~\citep{liu2023rf}.\\(NFE, FID)=\{(2, 315.3), (5, 124.5), (10, 45.0)\}.}
\end{subfigure}
&
\begin{subfigure}{0.45\textwidth}
\setlength{\lineskip}{0pt}
\centering
\includegraphics[width=.99\linewidth]{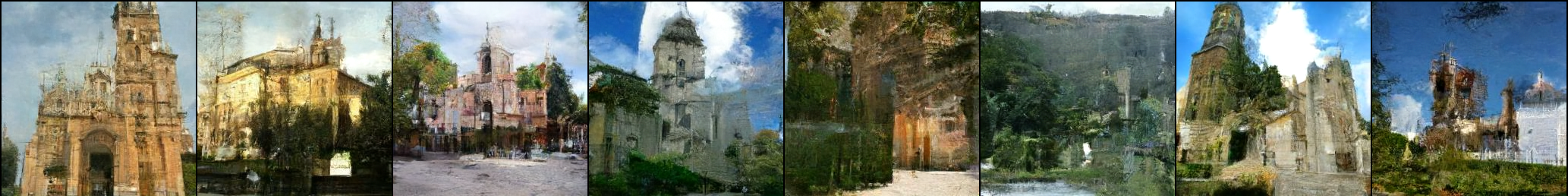}
\includegraphics[width=.99\linewidth]{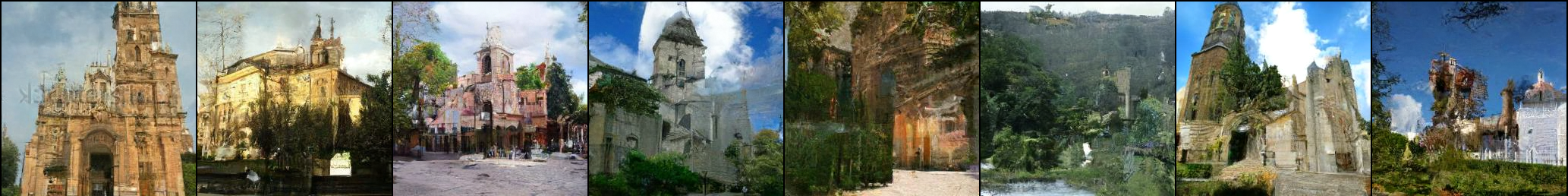}
\includegraphics[width=.99\linewidth]{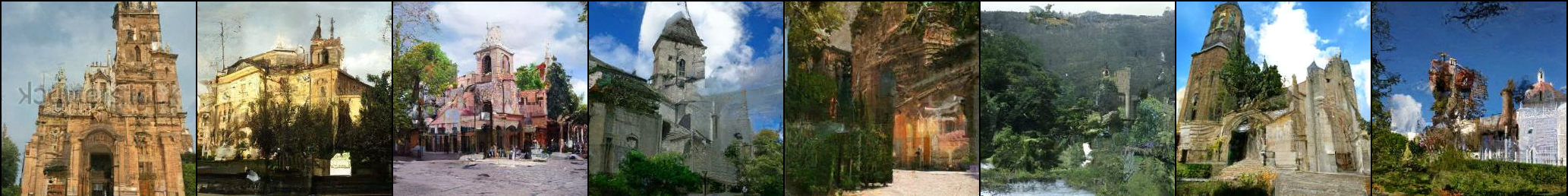}
\captionsetup{justification=centering}
\caption{2-RF (1-SeqRF).\\(NFE, FID)=\{(2, 123.8), (5, 76.4), (10, 54.6)\}.}
\end{subfigure}
\\
\begin{subfigure}{0.45\textwidth}
\setlength{\lineskip}{0pt}
\centering
\includegraphics[width=.99\linewidth]{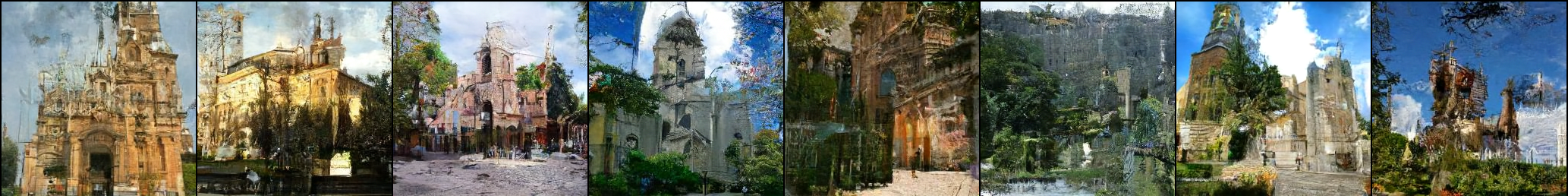}
\includegraphics[width=.99\linewidth]{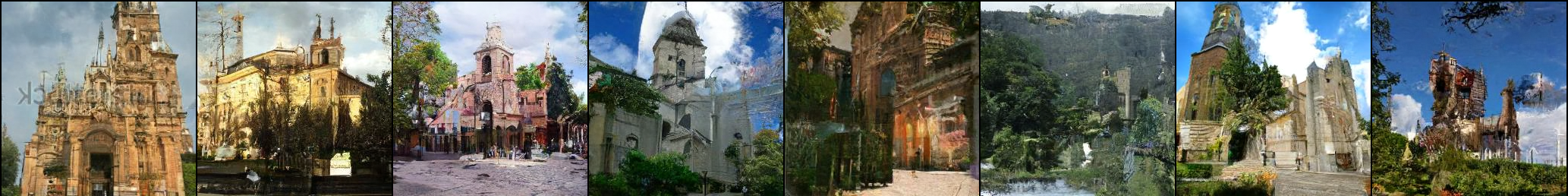}
\includegraphics[width=.99\linewidth]{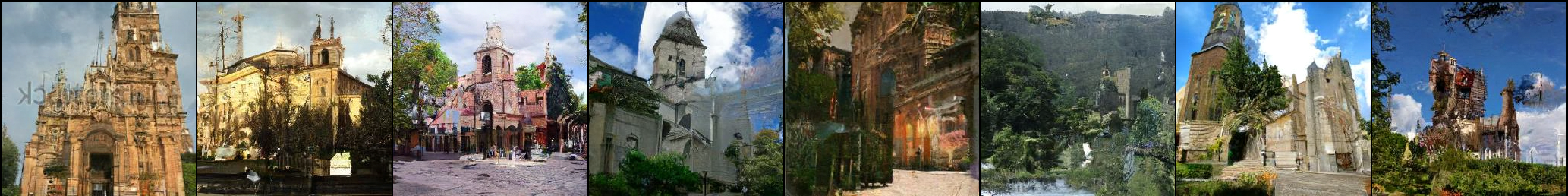}
\captionsetup{justification=centering}
\caption{\textbf{2-SeqRF} (Ours).\\(NFE, FID)=\{(2, 104.4), (5, 56.6), (10, 45.7)\}.}
\end{subfigure}
&
\begin{subfigure}{0.45\textwidth}
\setlength{\lineskip}{0pt}
\centering
\includegraphics[width=.99\linewidth]{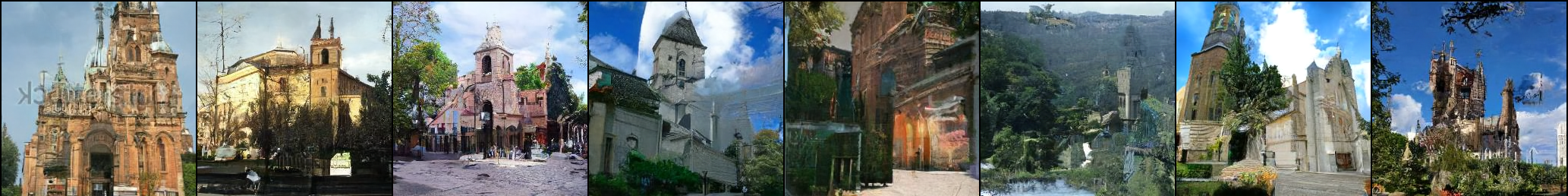}
\includegraphics[width=.99\linewidth]{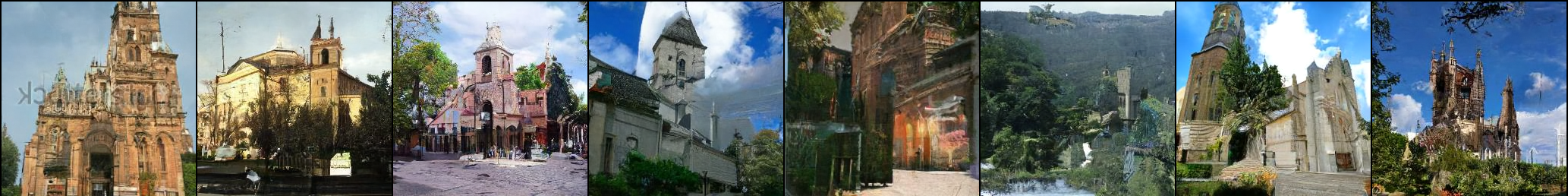}
\includegraphics[width=.99\linewidth]{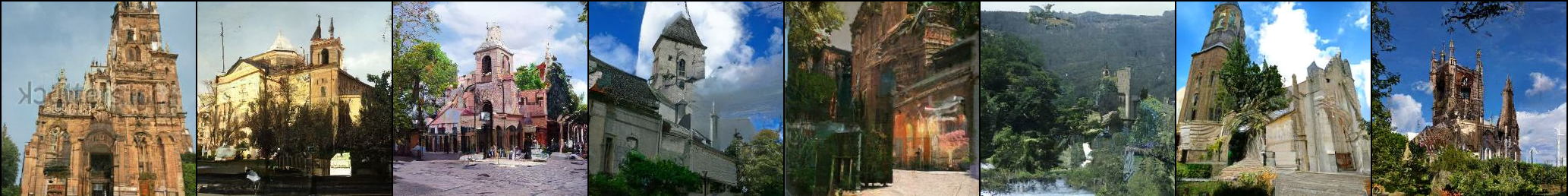}
\captionsetup{justification=centering}
\caption{\textbf{4-SeqRF} (Ours).\\(NFE, FID)=\{(4, 35.8), (5, 30.9), (10, 28.1)\}.}
\end{subfigure}
\end{tabular}
\caption{
Uncurated images generated by a few-step Euler solver from models learned by RF and SeqRF.
Except for the 4-SeqRF case (such that NFE=4 for the first row), the NFEs to generate images at three rows are 2, 5, and 10, respectively.
For reproducibility issues, all images are generated from the \textbf{same seed} of uniform Gaussian noises.
}
\label{fig:lsun_generated}
\end{figure}
%%%%%%%%%%%%%%%%%%%%%%%%%%%%%%%
\section{Equivalence of Flow Matching and Conditional Flow Matching}\label{app:fm_cfm}
In this section, we recap the equivalence of gradients of flow matching objective~\cref{eq:fm_objective} and conditional flow matching objective~\cref{eq:cfm_objective}.
For further information, refer to~\citet{lipman2023flow}, Theorem 2.

\begin{prop}[Equivalence of~\cref{eq:fm_objective} and~\cref{eq:cfm_objective}]
Let $p_t(x)>0,\forall x\in\bbR^D$ and $t\in[0,1]$, where $p_t$ is the marginal probability path over $x$.
Then, $\grad_\theta \calL_\text{FM}(\theta) = \grad_\theta \calL_\text{CFM}(\theta)$.
\end{prop}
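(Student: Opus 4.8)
The plan is to expand the squared norms in both objectives and show they differ only by a $\theta$-independent constant, so that their gradients coincide. Throughout I write $q(x_1)$ for the data distribution and use the defining relation of the \emph{marginal} vector field as a posterior-weighted average of the conditional vector fields,
\[
u(x,t)\, p_t(x)
&=
\int u(x,t\given x_1)\, p_t(x\given x_1)\, q(x_1)\, \dee x_1,
\]
which is well-defined precisely because $p_t(x)>0$ is assumed for all $x$ and $t$. First I would expand each integrand via $\norm{v_\theta - u}_2^2 = \norm{v_\theta}_2^2 - 2\langle v_\theta, u\rangle + \norm{u}_2^2$, applied to the marginal field $u(x,t)$ inside $\calL_\text{FM}$ and to the conditional field $u(x,t\given x_1)$ inside $\calL_\text{CFM}$.

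Second, I would match the $\norm{v_\theta}_2^2$ terms. In the FM objective this term is integrated against $p_t(x)$, whereas in CFM it is integrated against the joint $q(x_1)\,p_t(x\given x_1)$; marginalizing the latter over $x_1$ recovers $p_t(x)$, so the two contributions are identical and carry no dependence on the choice of vector field at all. The third and crucial step is to match the cross terms. Using the defining relation above together with Fubini's theorem to exchange the order of integration,
\[
\bbE_{p_t(x)}\langle v_\theta(x,t), u(x,t)\rangle
&=
\int \langle v_\theta(x,t),\, u(x,t)\, p_t(x)\rangle\, \dee x\\
&=
\int\!\!\int \langle v_\theta(x,t), u(x,t\given x_1)\rangle\, p_t(x\given x_1)\, q(x_1)\, \dee x_1\, \dee x\\
&=
\bbE_{q(x_1)\,p_t(x\given x_1)}\langle v_\theta(x,t), u(x,t\given x_1)\rangle,
\]
so the cross terms agree as well.

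Finally, the leftover $\norm{u}_2^2$ terms — $\norm{u(x,t)}_2^2$ on the FM side and $\norm{u(x,t\given x_1)}_2^2$ on the CFM side — do not involve $\theta$, so they contribute only a constant $C$ to the difference $\calL_\text{FM}-\calL_\text{CFM}$; applying $\grad_\theta$ annihilates this constant and yields $\grad_\theta\calL_\text{FM}=\grad_\theta\calL_\text{CFM}$. The main obstacle, and the only place real care is needed, is the cross-term identity: it requires both the well-posedness of the quotient defining the marginal field and a dominated-integrability condition licensing the interchange of integration order, both of which rest on the positivity hypothesis $p_t(x)>0$ (and the implicit assumption that $v_\theta$, $u$, and the densities decay fast enough for the integrals to be finite). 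The remaining manipulations are routine bilinear algebra.
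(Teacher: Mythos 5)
Your proof is correct and takes essentially the same route as the paper's: expand both squared norms, observe that the $\norm{v_\theta(x,t)}_2^2$ terms coincide because marginalizing $q(x_1)\,p_t(x\given x_1)$ over $x_1$ recovers $p_t(x)$, match the cross terms via the posterior-weighted definition of the marginal vector field together with an exchange of integration order, and absorb the $\theta$-independent $\norm{u}_2^2$ terms into a constant killed by $\grad_\theta$. If anything, you are more careful than the paper's write-up about where the positivity hypothesis $p_t(x)>0$ and the integrability needed for Fubini actually enter.
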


\begin{proof}
First, we remind the two equations.
\[
\calL_\text{FM}(\theta)
&=
\bbE_{t,x_1}
\left[
\norm{u_t(x,\theta) - v_t(x)}_2^2
\right] \\
\calL_\text{CFM}(\theta)
&=
\bbE_{t,x_1,x_t|x_1}
\left[
\norm{u_t(x_t,\theta) - v_t(x_t|x_1)}_2^2
\right].
\]
From direct computation, we have
\[
\norm{u_t(x,\theta) - v_t(x)}_2^2
&=
\norm{u_t(x,\theta)}_2^2 - 2u_t(x)\cdot v_t(x) + \norm{v_t(x)}_2^2\\
\norm{u_t(x,\theta) - v_t(x)}_2^2
&=
\norm{u_t(x,\theta)}_2^2 - 2u_t(x)\cdot v_t(x|x_1) + \norm{v_t(x|x_1)}_2^2.
\]
Since $\bbE_{x}\left[ \norm{u_t(x,\theta)}_2^2 \right] = \bbE_{x_1, x|x_1}\left[\norm{v(t)}_2^2\right]$, the first term at the right-hand side is removed.
Since $v_t(x)$ and $v_t(x|x_1)$ is an analytically calculated form that is independent of $\theta$, the only remaining term is the dot products.
Then
\[
\bbE_{p_t(x)}\left[ u_t(x)\cdot v_t(x)\right]
&=
\int p_t(x) (u_t(x)\cdot v_t(x))\dee x \\
&=
\int p_t(x) \left(u_t(x)\cdot \int \frac{v_t(x|x_1)p_t(x|x_1)q(x_1)}{p_t(x)}\dee x_1\right)\dee x \\
&=
\int \int
u_t(x)q(x_1)\left( u_t(x) \cdot v_t(x|x_1)\right)
\dee x_1 \dee x \\
&=
\bbE_{x_1, x} \left[ u_t(x) v_t(x|x_1) \right]
\]
where the change of integration is possible since the flow is a diffeomorphism.
\end{proof}

%%%%%%%%%%%%%%%%%%%%%%%%%%%%%%%%%%%%%%%%%%%%%

\section{Sequential Straightness}\label{app:seq_straightness}
\subsection{Implementation details}
We measured the sequential straightness
\[
S_\text{seq}(\calZ)
&=
\sum_{i=1}^K \int_{t_{i-1}}^{t_i} \norm{\frac{Z_{t_i}^i - Z_{t_{i-1}}^i}{t_i - t_{i-1}} - \frac{\dee}{\dee t} Z_t^i}^2 \dee t
\]
by the following procedure.
\begin{enumerate}
\item[(1)]
Random draw the time interval $[t_{i-1},t_i]$ with $i\in[1:K]$, as in the SeqRF algorithm.
\item[(2)]
Sample the oracle input at time $t_i$ as $Z_{t_i}=(1-t_i) X_0 + t_i X_1$.
\item[(3)]
Then run the ODE solver to collect all sample within $[t_{i-1}, t_i]$.
\item[(4)]
$Z_{t_{i-1}} = \hat{X}_{t_{i-1}}$, that is, the terminal value of the reverse-time ODE solver at time $t_{i-1}$.
\item[(5)]
The vector field is calculated as $\frac{Z_{t_i} - Z_{t_{i-1}}}{t_i - t_{i-1}}$, and the derivative is defined as the difference between the values of two adjacent ODE solver timesteps, divided by the size of timesteps.
\end{enumerate}

\subsection{\mathwrap{Sequential straightness over $t$.}\label{app:sec:seq_str_plot_t}}
In addition to \cref{fig:seq_straightness}, we provide the sequential straightness over time for $k=\{2, 4, 8, 12\}$ in \cref{app:fig:seq_str_k}.
\begin{figure}[!ht]
\centering
\begin{tabular}{cc}
\centering
\includegraphics[width=0.3\linewidth]{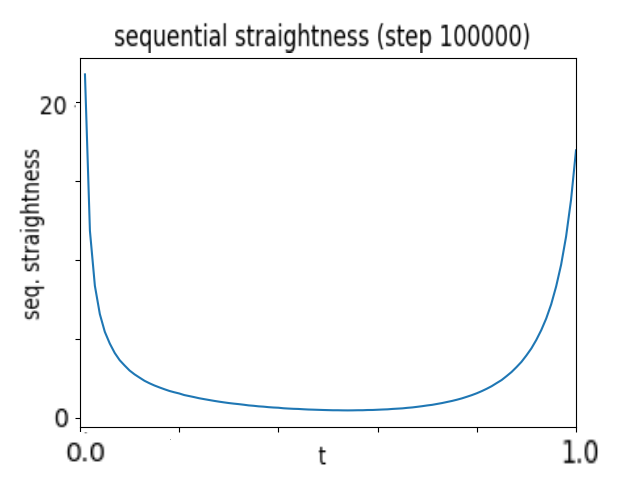}
&
\includegraphics[width=0.3\linewidth]{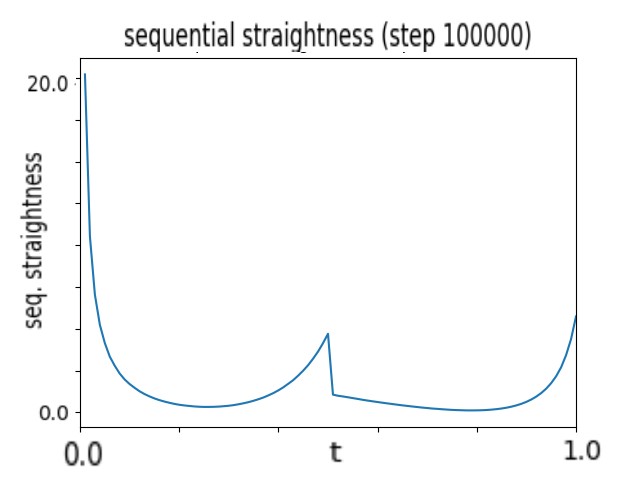}
\\
\includegraphics[width=0.3\linewidth]{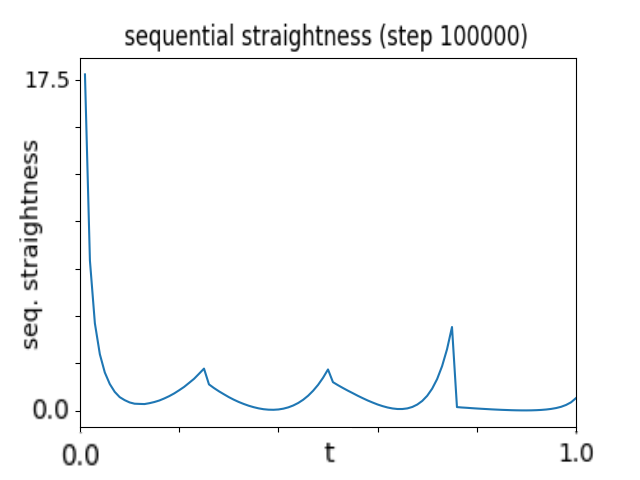}
&
\includegraphics[width=0.3\linewidth]{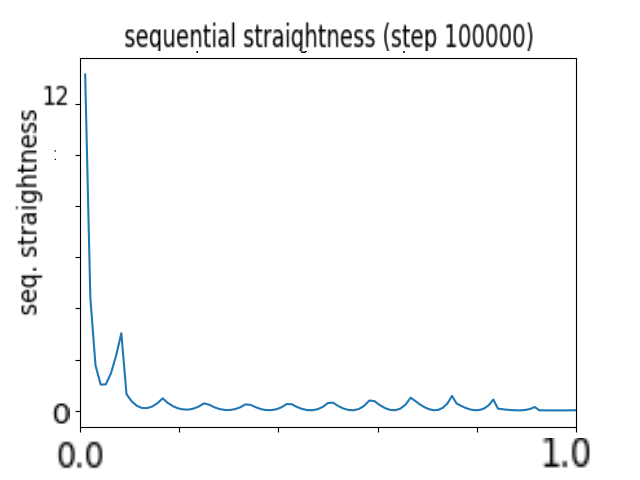}
\end{tabular}
\caption{
Sequential straightness results of $k$-SeqRF with $k=\{1, 2, 4, 12\}$, from upper left to lower right.
}
\label{app:fig:seq_str_k}
\end{figure}

\begin{figure}[!ht]
\centering
\begin{tabular}{ccc}
\centering
\includegraphics[width=0.3\linewidth]{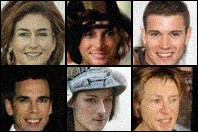}
&
\includegraphics[width=0.3\linewidth]{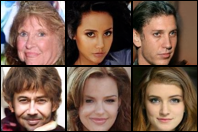}
&
\includegraphics[width=0.3\linewidth]{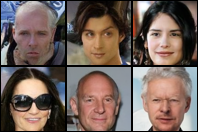}
\end{tabular}
\caption{
Uncurated random CelebA images chosen from random permutation sampled with $k$-SeqRF without distillation. From \textbf{up} to \textbf{down}: \{1, 2, 4\}-step Euler solver using $k$-SeqRF models.
}
\label{fig:celeb_images}
\end{figure}
% \section{Results of CIFAR-10 Generation in Other Datasets}

% \input{tables/c10_others_fid}

% \pagebreak
% \newpage
% \section{Adaptive interval that minimizes straightness}
% In the na\"ive SeqRF algorithm, we uniformly segmented the time intervals, i.e., when the distribution at time $t=0$ and $t=1$ are noise distribution $\calN(0, I)$ and $p_\text{data}(x)$, respectively, then each SeqRF pair $\left( p_t(x), p_{t'}(x) \right) $

% % We use a dynamic programming algorithm, which is a vari

% We provide a detailed workflow and pseudo-code of the dynamic programming algorithm, which is a variant of Dijkstra's algorithm for weighted and directed graphs.

% \begin{lstlisting}[language=Python, caption=Pseudocode for adaptive segmentation]
% def optimal_timestep(trajectory, reflow_t):
%   """
%   Input
%     trajectory
%       Collection of all trajectories within a minibatch.
%       float array of shape (n_steps + 1, B, H, W, C)
%     reflow_t
%       Projected number of intervals. type int.

%   Return
%     all_path
%       length-{reflow_t} list of all path, indexed by time bin. (int type)
%     all_distance
%       length-{reflow_t} float-type list of all straightness.
%   """

%   curvature = trajectory[1:] - trajectory[:-1] # (n_steps, B, H, W, C)
%   n_steps_plus_one, B, H, W, C = trajectory.shape
%   n_steps = n_steps_plus_one - 1

%   straightness_map = np.full((n_steps + 1, n_steps + 1), np.inf)

%   # Step 1. Get all straightness from start_step to end_step
%   for start_step in range(n_steps + 1):
%     for end_step in range(start_step + 1, n_steps + 1):
%       average_diff = \
%         (trajectory[end_step] - trajectory[start_step]) / \
%         (end_step - start_step)
%       curvature_map = curvature[start_step:end_step]
%       path_gap = np.expand_dims(average_diff, 0) - curvature_map
%       straightness_map[start_step, end_step] = np.mean(np.sum(path_gap ** 2, (2, 3, 4)))
      
%   # Step 2. Consider the straightness as weight of a directed graph,
%   # where (start_step --> end_step) edges are active.
%   D = np.full((n_steps + 1, reflow_t + 1), np.inf)
%   P = np.full((n_steps + 1, reflow_t + 1), -1)

%   D[0, 0] = 0 # length k=0, (start, end) = (0, 0)
%   for k in range(1, reflow_t + 1):
%     for v in range(n_steps + 1):
%       total_straightness = D[:, k - 1] + straightness_map[:, v]
%       D[v, k] = np.min(total_straightness)
%       P[v, k] = np.argmin(total_straightness)

%   # Step 3. Recover the shortest path
%   all_path = []
%   all_distance = []
%   for all_k in range(1, reflow_t + 1):
%     path = [n_steps]
%     current_v = n_steps
%     for j in range(all_k - 1, 0, -1):
%       path.insert(0, P[current_v, j])
%       current_v = P[current_v, j]
%     all_path.append(path)
%     all_distance.append(D[n_steps, all_k])
%   return all_path, all_distance

% \end{lstlisting}

% \pagebreak
% \newpage
\section{Proofs}

Before we take account of the error bound of the ODE with respect to the Lipschitzness of the ODE, we first define the local and global truncation error.
\begin{defn}[truncation error of the numerical method]
Let the ODE be defined as in~\eqref{eq:ode_ivp}.
Then the (local) truncation error at time $t$ with timestep $h$ is defined by
\[
\tau(t, h)
&=
\frac{x(t+h) - x(t)}{h} - f(x(t), t).
\label{eq:lte}
\]
The global truncation error at time $c\in[a,b]$ is defined by
\[
E(c) = x(c) - x_c.
\label{eq:gte}
\]
\end{defn}
%%%%%%%%%%%%%%%%%%%%%%%%%%%%%%%%%%%%%%%%%%%%%%%%%%%%%%%5
\begin{defn}[Bound of local truncation error of the ODE solver]
Let the ODE be defined as in~\eqref{eq:ode_ivp}.
Then if the local truncation error is bounded by
\[
\tau(t,h)
\leq
\calO(h^{p})
\]
for some numerical solver \texttt{ODE-Solver$(x_{t_1}, t_1, t_2;v_\theta)$} for some $r$, i.e., 
\[
|\tau(t,h)\leq Kh^p\text{ for some }0<h\leq\left|h_0\right|
\]
for some finite $h_0$, then this solver is called to have \emph{order of accuracy} $p$.
\end{defn}
%%%%%%%%%%%%%%%%%%%%%%%%%%%%%%%%%%%%%%%%%%%%%%%%%%%%%%%5
\subsection{\mathwrap{Proof of~\cref{thm:gte_bound}}\label{app:proof:gte_bound}}
Consider the general one-step method
\[
x_{t+h}=x_t + hv(x_t, t)
\]
where $v$ is a continuous function.
And Let $M_\textrm{sup}(t)$ be its supremum dilation, (e.g., maximum Lipschitz constant at time $t$), that is,
\[
\norm{v(x_t, t) - v(x_t ', t)}
\leq
M_\textrm{sup}(t)
\norm{x_t - x_t '}.
\]
for all $\{(t, x_t), (t', x_t')\}$ in a bounded rectangle
\[
D = \{(t, x): t \in [a,b], \norm{x - x_0}\leq C\}
\]
for some finite constant $C\geq 0$.

Then, the global truncation error $E$ is bounded by
\[
E(c)
&\leq
\int_{t=a}^c \tau(t)
\exp \left( \int_{t'=t}^c M_\textrm{sup}(t')\dee t' \right) \dee t.
\]
for all $c\in[a,b]$.

\begin{proof}
we first rewrite~\eqref{eq:lte} as
\[
x(t+h)
&=
x(t) + hf(x(t), t) + h\tau(t),
\]
then we get
\[
E(t+h)
&=
E(t)
+
h
\left[
f(t, x(t)) - f(t, x_t)
\right]
+ h\tau(t).
\]
Then by the Lipschitz condition, 
\[
E(t+h)
\leq
E(t) + hM(t)E(t) + h\tau(t),
\]
following that
\[
E(t+h)
\leq
(1 + hM(t))E(t) + h\tau(t).
\]
By induction through all timesteps $\{a, a+h, \cdots, a+Kh\}$,
\[
E(t+Kh)
&\leq
\sum_{i=0}^{K-1} h\tau(t + ih) \prod_{j=i}^{K-1} (1 + hM(t + jh))\\
&\leq
\sum_{i=0}^{K-1} h\tau(t + ih)  e^{\sum_{j=i}^{K-1} h M(t+jh)} . 
\]
When we take infinitesimal limit, i.e., $h\to 0$, with $K \to \frac{c-a}{h}$, then
\[
E(c)
&\leq
\int_{t=a}^c \tau(t)
\exp \int_{t'=t}^c M(t').
\]
As a special case, let $T_c=\max_{c\in[a,c]} \tau_c$ and $M(t)=M$ for all $t\in[a,c]$, then
\[
E(c)\leq \frac{T}{M}\left[\exp(M(c-a)) - 1\right].
\]
\end{proof}
%%%%%%%%%%%%%%%%%%%%%%%%%%%%%%%%%%%%%%%%%%%%%%%%%%%%%%%%
\subsection{\mathwrap{Proof of~\cref{lem:dahlquist}}\label{app:dahlquist}}

\begin{proof}
Let the interval of a single step of the ODE solver be $h$.
Then the linear multistep method is defined as
\[
x_{t+h}
&=
\sum_{j=0}^p \alpha_j x_{t-jh} + h \sum_{j=-1}^p \beta_j v_t(x_{t-jh}, t-jh), \quad i\geq p
\]
where the interval of a single step is $h$, $t_{i}=t_0 + hi$ stands for $i$-th time step, and $v_t$ is the vector field, or the drift function.
Let the linear multistep method have the convergence order of $r$.
Then according to the \emph{Dahlquist equivalence theorem}, the global truncation error of the ODE solver has the order of $\calO(h^{r})$.
In case of the sequential reflow algorithm with the same~\gls{nfe}, the single interval of a single step is given as $\frac{h}{K}$ where $K$ is the number of equidistribted intervals that divide $[a,b]$ into time segments.
Then the global truncation error of the ODE solver of the sequential reflow algorithm is of $\calO\left(\left(\frac{h}{K}\right)^r\right)$.
As we have $K$ segments, the order of the global truncation error is $K\times\calO\left(\left(\frac{h}{K}\right)^r\right)=K^{1-r}\calO(h^r)$.
\end{proof}

\section{More related works}\label{app:sec:more_related_works}
\paragraph{Designing Probability Flow ODE}
As a problem of designing ODEs as probability path,~\citet{chen2018node} has opened up a new way by showing that an ODE can be learned with neural network objectives, and \citet{grathwohl2019ffjord} further applied this for the generative models.
Generating the invertible flow has been also achieved as architecture design~\citep{kingma2018glow} and from autoregressive model~\citep{papa2018maf,huang2018naf}.
Those early methods, however, are computationally inefficient, had memory issue, or not competitive in terms of synthesis quality in generative model literature.

\paragraph{Continuous-Time Generative Modeling}
As a family of continuous-time generative modeling by learning dynamics,~\citet{sohl2015diffusion} interpreted the generative modeling as the construction of the vector field of the stochastic processes.
Later on, ~\citet{ho2020ddpm,song2019ncsn,song2021scorebased} proposed efficient techniques for training this by interpreting the reverse stochastic process as the denoising model.
According to the diffusion models trained on SDEs, \citet{song2021ddim} proposed an efficient technique by sampling from a generalized non-Gaussian process, which results in a fast deterministic sampling speed.
Further, ~\citet{kingma2021vdm,huang2021variational} unified the framework of continuous-time models, including diffusion model and flow-based generative models.

%%%%%%%%%%%%%%%%%%%%%%%%%%%%%%%%%%%%%%%%%%%%%%%%%%%%%%%%%%%%%%%%%%%%%%%%%%%%%%%
%%%%%%%%%%%%%%%%%%%%%%%%%%%%%%%%%%%%%%%%%%%%%%%%%%%%%%%%%%%%%%%%%%%%%%%%%%%%%%%

\end{document}